\DeclareMathOperator*{\argmin}{arg\,min}
\DeclareMathOperator*{\argmax}{arg\,max}
\newtheoremstyle{normal}
  {.2\baselineskip\@plus.05\baselineskip\@minus.05\baselineskip}
  {.05\baselineskip\@plus.05\baselineskip\@minus.05\baselineskip}
  {\itshape}
  {}
  {\bfseries}
  {.}
  { }
  {}
\newtheoremstyle{cited}%
  {.2\baselineskip\@plus.05\baselineskip\@minus.05\baselineskip}
  {.05\baselineskip\@plus.05\baselineskip\@minus.05\baselineskip}
  {\itshape}
  {}
  {\bfseries}
  {.}
  {.5em}
  {\thmname{#1} \thmnumber{#2} \thmnote{\normalfont#3}}
\theoremstyle{normal}
\newtheorem{thm}{Theorem}
\newtheorem{corollary}{Corollary}
\newtheorem{define}{Definition}
\theoremstyle{cited}
\newtheorem{citedcorollary}{Corollary}
\newcommand{\tuple}[1]{\ensuremath{\left \langle #1 \right \rangle }}
\newcommand{\alg}[1]{\textnormal{\textsc{#1}}}
\newcommand{\R}{\mathbb{R}}
\newcommand{\N}{\mathbb{N}}
\newcommand{\B}{\mathbb{B}}
\newcommand{\mb}[1]{\mathbf{#1}}
\newcommand{\mc}[1]{\mathcal{#1}}
\newcommand{\mtt}[1]{\mathtt{#1}}
\newcommand{\x}{\mathbf{x}}
\newcommand{\y}{\mathbf{y}}
\newcommand{\vb}{\mathbf{v}}
\newcommand{\Xb}{\mathbf{X}}
\newcommand{\Yb}{\mathbf{Y}}
\newcommand{\Zb}{\mathbf{Z}}
\newcommand{\Cb}{\mathbf{C}}
\newcommand{\X}{\mathcal{X}}
\newcommand{\Y}{\mathcal{Y}}
\newcommand{\Z}{\mathcal{Z}}
\patchcmd\@combinedblfloats{\box\@outputbox}{%
  \stepcounter{additionalboxlevel}%
  \box\@outputbox
}{}{\errmessage{\noexpand\@combinedblfloats could not be patched}}
  \ifnum\value{additionalboxlevel}>\value{maxboxlevel}%
      \the\value{additionalboxlevel}%
  \@whilenum\value{additionalboxlevel}<\value{maxboxlevel}\do{%
\newcommand{\overbar}[1]{\mkern 1.5mu\overline{\mkern-1.5mu#1\mkern-1.5mu}\mkern 1.5mu}
\newcommand{\ol}[1]{\overbar{#1}}
\newcommand{\sr}{R}
\newcommand{\compat}{\sim}
\newcommand{\sumspf}{SumSPF}
\newcommand{\JI}{\mb{1}_R}
\newcommand{\bigbowtie}{\mathlarger{\mathlarger{\bowtie}}}
\newcommand{\shSAT}{\#\text{SAT}}
\mathchardef\mhyphen="2D
\DeclareMathOperator*{\argoplus}{arg\,\hspace{-0.1em}\oplus}
\DeclareMathOperator*{\argOR}{arg\,\bigvee}
\icmltitlerunning{The Sum-Product Theorem: A Foundation for Learning Tractable Models}
\begin{document} 

\twocolumn[
\icmltitle{The Sum-Product Theorem: A Foundation for Learning Tractable Models}

\vspace{-1em}
\icmlauthor{Abram L. Friesen}{afriesen@cs.washington.edu}
\icmlauthor{Pedro Domingos}{pedrod@cs.washington.edu}
\icmladdress{Department of Computer Science and Engineering, 
            University of Washington,
            Seattle, WA 98195 USA}

\icmlkeywords{tractable, sum-product network, inference, learning, semiring, treewidth}

\vskip 0.15in
]

\begin{abstract} 
Inference in expressive probabilistic models is generally intractable, 
which makes them difficult to learn and limits their applicability.
Sum-product networks
are a class of deep
models where, surprisingly, inference remains tractable even when an
arbitrary number of hidden layers are present. In this paper, we
generalize this result to a much broader set of learning problems: all
those where inference consists of summing a function over a semiring.
This includes satisfiability, constraint satisfaction, optimization,
integration, and others. In any semiring, for summation to be
tractable it suffices that the factors of every product have disjoint
scopes. This unifies and extends many previous results in the
literature. Enforcing this condition at learning time thus ensures that
the learned models are tractable. We illustrate the power and
generality of this approach by applying it to a new type of structured
prediction problem: learning a nonconvex function that can be globally
optimized in polynomial time. We show
empirically that this greatly outperforms the standard approach of
learning 
without regard to the cost of optimization.
\end{abstract} 

\section{Introduction}
\label{sec:intro}

Graphical models are a compact representation often used
as a target for learning probabilistic models. Unfortunately, inference in them
is exponential in their treewidth~\citep{Chandrasekaran2008}, a common
measure of complexity. Further, since inference is a subroutine of learning, graphical
models are hard to learn unless restricted to those with 
low treewidth~\citep{Bach2001, Chechetka2008}, but few real-world
problems exhibit this property.
Recent research, however, has shown that probabilistic models can in fact 
be much more expressive than this while remaining 
tractable~\citep{ltpm2014}. 
In particular, sum-product networks (SPNs)~\citep{Gens2013,Poon2011} are a class of
deep probabilistic models that consist of many layers of hidden variables and can have
unbounded treewidth.
Despite this, inference in SPNs is guaranteed to be tractable, 
and their structure and parameters can be effectively and accurately learned 
from data~\citep{Gens2012,Gens2013,Rooshenas2014}.

In this paper, we generalize and extend the ideas behind SPNs to enable 
learning tractable high-treewidth representations for a
much wider class of problems,
including satisfiability, MAX-SAT, model counting,
constraint satisfaction, marginal and MPE inference, integration,
nonconvex optimization, database querying, and first-order probabilistic inference.
The class of problems we address can be viewed as generalizing
structured prediction beyond combinatorial optimization~\citep{Taskar2005}, 
to include optimization for continuous models and others.
Instead of approaching each domain individually, we build on a long line of
work showing how, despite apparent differences, these problems
in fact have much common structure
(e.g.,~\citet{Bistarelli1997,Dechter1999,Aji2000,Wilson2005,Green2007,Dechter2007,Bacchus2009}); 
namely, that each consists of summing a function over a semiring.
For example, in the Boolean semiring the sum and product operations are disjunction
and conjunction, and deciding satisfiability is summing a Boolean formula
over all truth assignments. MPE inference is summation over all states in the
max-product semiring, etc.

We begin by identifying and proving the sum-product theorem, 
a unifying principle for tractable inference that states a simple sufficient
condition for summation to be tractable in any semiring: that the factors of every
product have disjoint scopes. In ``flat'' representations like graphical models and
conjunctive normal form, consisting of a single product of sums, 
this would allow only trivial models;
but in deep representations like SPNs and negation normal form
it provides remarkable flexibility.
Based on the sum-product theorem, we develop an algorithm for learning 
representations that satisfy this condition, thus guaranteeing that the learned 
functions are tractable yet expressive.
We demonstrate the power and generality of our approach by 
applying it to a new type of structured prediction problem: learning a nonconvex
function that can be optimized in polynomial time.
Empirically, we show that this greatly outperforms the standard approach of
learning a continuous function without regard to the cost of optimizing it. 
We also show that a number of existing and novel results are corollaries
of the sum-product theorem, 
propose a general algorithm for inference in any semiring,
define novel tractable classes of constraint satisfaction problems, 
integrable and optimizable functions, and database queries, and
present a much simpler proof of the tractability of tractable Markov logic.

\setlength{\textfloatsep}{2pt plus 0.0pt minus 2.0pt}
\setlength{\floatsep}{2.0pt plus 2.0pt minus 2.0pt}
\setlength{\intextsep}{4.0pt plus 2.0pt minus 1.0pt}

\vspace{-0.6em}
\section{The sum-product theorem}
\label{sec:spt}

We begin by introducing our notation and defining several important concepts.
We denote a vector of variables by $\Xb = (X_1, \dots, X_n)$ and its value by 
$\x = (x_1, \dots, x_n)$ for $x_i \in \X_i$ for all $i$,
where $\X_i$ is the domain of $X_i$. We denote subsets 
(for simplicity, we treat tuples as sets)
of variables as $\Xb_A, \Xb_a \subseteq \Xb$, 
where the domains $\X_A, \X_a$ are the Cartesian product
of the domains of the variables in $\Xb_A, \Xb_a$, respectively.
We denote (partial) assignments as $\mb{a} \in \X_A$ and restrictions of these to 
$\Xb_B \subset \Xb_A$ as $\mb{a}_B$. 
To indicate compatibility between $\mb{a} \in \X_A$ and $\mb{c} \in \X_C$ (i.e., that
$\mb{a}_j = \mb{c}_j$ for all $X_j \in \Xb_A \cap \Xb_C$), we write $\mb{a} \compat \mb{c}$. 
The \emph{scope} of a function is the set of variables
it takes as input.

\begin{define}
A \emph{commutative semiring} $(\sr, \oplus, \otimes, 0, 1)$ 
is a nonempty set $\sr$ on which the operations of sum ($\oplus$) 
and product ($\otimes$) are defined and satisfy the following conditions:
(i) $(\sr,\oplus)$ and $(\sr,\otimes)$ are associative and commutative, 
with identity elements $0,1 \in R$ such that $0 \neq 1$, $a \oplus 0 = a$, and $a \otimes 1 = a$ for all $a \in \sr$;
(ii) $\otimes$ distributes over $\oplus$, such that
$a \otimes ( b \oplus c ) = ( a \otimes b ) \oplus ( a \otimes c )$
for all $a, b, c \in \sr$; and
(iii) $0$ is absorbing for $\otimes$, such that $a \otimes 0 = 0$ for all $a \in \sr$.
\end{define}
%

\vspace{-0.3em}
We are interested in computing summations $\bigoplus_{\x \in \X} F(\x)$, for
$(\sr, \oplus, \otimes,0,1)$ a commutative semiring and $F : \X \rightarrow \sr$ a function 
on that semiring, with
$\X$ a finite set (but see Section~\ref{sec:continuous} for extensions to continuous variables).
We refer to such a function as a sum-product function.
\begin{define}
\hspace{-0.3em}
A \emph{sum-product function} (SPF) 
over $(\sr,\hspace{-0.1em}\Xb,\hspace{-0.1em}\Phi)$, 
where $\sr$ is a semiring, $\Xb$ is a set of variables, 
and $\Phi$ is a set of constant ($\phi_l \in \sr$) and univariate functions 
($\phi_l : \X_{j} \rightarrow \sr$ for $X_j \in \Xb$), 
is any of the following:
(i) a function $\phi_l \in \Phi$,
(ii) a product of SPFs, or
(iii) a sum of SPFs.
\end{define}
An SPF $S(\Xb)$ computes a mapping $S : \X \rightarrow \sr$ 
and can be represented 
by a rooted directed acyclic graph (DAG), where each
leaf node is labeled with a function $\phi_l \in \Phi$ and
each non-leaf node is labeled with either $\oplus$
or $\otimes$ and referred to as a sum or product node, respectively. 
Two SPFs are \emph{compatible} iff they compute the
same mapping; i.e., $S_1(\x) = S_2(\x)$ for all $\x \in \X$, where
$S_1(\Xb)$ and $S_2(\Xb)$ are SPFs.
The \emph{size} of an SPF is the number of edges in the graph. 
The DAG rooted at each node $v \in S$ represents a sub-SPF
${S_v : \X_v \rightarrow \sr}$ for $\Xb_v \subseteq \Xb$.
Notice that restricting the leaf functions $\phi_l$ to be univariate incurs 
no loss of generality because any mapping $\psi : \X \rightarrow \sr$
is compatible with the trivial SPF 
$F(\Xb) = \bigoplus_{\x \in \X} ( \psi(\x) \otimes \bigotimes_{i = 1}^n [X_i = \x_i] )$,
where the indicator function $[.]$ has value $1$ when its argument is true, and $0$ 
otherwise (recall that $0$ and $1$ are the semiring identity elements).
SPFs are similar to arithmetic circuits~\citep{Shpilka2010}, but the leaves
of an SPF are functions instead of variables. 
\citet{Darwiche2003} used arithmetic circuits as a data structure to support
inference in Bayesian networks over discrete variables.
An important subclass of SPFs are those that are decomposable.\looseness=-1

\begin{define}
A product node is \emph{decomposable} iff the scopes of its children are disjoint.
An SPF is decomposable iff all of its product nodes are decomposable.
\end{define}

\noindent
Decomposability 
is a simple condition
that defines a class of functions for which inference is tractable.

\begin{thm}[Sum-product theorem]
Every decomposable SPF can be summed in time linear in its size.
\label{thm:spt}
\end{thm}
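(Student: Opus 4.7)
My plan is a straightforward bottom-up dynamic programming algorithm on the SPF's DAG. For each node $v$, let $T_v = \bigoplus_{\x_v \in \X_v} S_v(\x_v)$; the quantity we want is $T_v$ at the root. I will compute the $T_v$ in reverse topological order (children before parents), memoizing so that shared subgraphs are processed only once, and recurse by cases on the three SPF constructors.

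If $v$ is a leaf labeled by a univariate function $\phi_l : \X_j \to \sr$, then $T_v = \bigoplus_{x_j \in \X_j} \phi_l(x_j)$, computed by a single pass over $\X_j$. If $v$ is a sum node with children $u_1, \ldots, u_k$, then by associativity and commutativity of $\oplus$ the outer sum over $\X_v$ commutes with the inner semiring sum, yielding $T_v = \bigoplus_i c_i \otimes T_{u_i}$, where each $c_i$ accounts for repeated enumeration of any variables in $\Xb_v \setminus \Xb_{u_i}$ (and is just $1$ in the common case where the children share $v$'s scope). The crucial case is a product node $v = \bigotimes_i u_i$: decomposability forces the scopes $\Xb_{u_i}$ to be pairwise disjoint, so $\X_v = \prod_i \X_{u_i}$ is a genuine Cartesian product, and an iterated distributive law gives
\[
T_v \;=\; \bigoplus_{\x_v \in \prod_i \X_{u_i}} \bigotimes_i S_{u_i}(\x_{u_i}) \;=\; \bigotimes_i \bigoplus_{\x_{u_i} \in \X_{u_i}} S_{u_i}(\x_{u_i}) \;=\; \bigotimes_i T_{u_i}.
\]

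The only auxiliary fact I need is this iterated distributive law, proved by induction on $k$: the base case $k = 2$ is axiom (ii) in the definition of a commutative semiring, and the inductive step groups the first $k-1$ factors using associativity and applies the two-factor version. For the running-time bound, computation at each node costs work proportional to its number of children (treating one leaf summation as a primitive), so summing over all nodes gives total work proportional to the number of edges in the DAG, which is precisely the size of the SPF.

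The main conceptual obstacle is pinpointing where decomposability actually bites: it is used solely at product nodes, and solely to guarantee that $\bigoplus_{\x_v}$ factors into independent enumerations over the children's scopes. Without it, variables shared between two child scopes would couple the factors, and the exchange of sum and product at the heart of the recursion would break down, reintroducing the exponential blow-up. Decomposability is therefore exactly the structural condition that makes the generalized distributive law applicable locally at every product node, and hence globally throughout the DAG.
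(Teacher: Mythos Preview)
Your proposal is correct and follows essentially the same argument as the paper: a bottom-up recursion computing $T_v = \bigoplus_{\X_v} S_v$ by cases on the node type, with your $c_i$ being exactly the paper's correction factor $\bigoplus_{\X_{v\backslash i}} 1$ at sum nodes and the distributive law handling decomposable product nodes. The paper presents the same three cases with the same justifications (commutativity/associativity for sums, distributivity plus disjoint scopes for products) and the same linear-time accounting.
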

\vspace{-\baselineskip}

\begin{proof}
The proof is recursive, starting from the leaves of the SPF.
Let $S(\Xb)$ be a decomposable SPF on commutative semiring $(\sr, \oplus, \otimes,0,1)$.
Every leaf node can be summed in constant time, 
because each is labeled with either a constant or univariate function.
Now, let $v \in S$ be a node, with $S_v(\Xb_v)$ the sub-SPF rooted at $v$ and
$Z_v = \bigoplus_{\X_{v}} S_{v}(\Xb_v)$ its summation.
Let $\{ c_i \}$ be the children of $v$ for $c_i \in S$, with sub-SPFs 
$S_{i}(\Xb_{i})$ for $\Xb_{i} \subseteq \Xb_v$ and summations $Z_{i}$.
Let $\X_{v \backslash i}$ be the domain of variables $\Xb_v \backslash \Xb_i$.
If $v$ is a sum node, then 
$Z_v = \bigoplus_{\X_v} \bigoplus_{i} S_{i}(\Xb_{i}) 
      = \bigoplus_{i} \bigoplus_{\X_v} S_{i}(\Xb_{i})
      = \bigoplus_{i} \bigoplus_{\X_{v \backslash i}} \bigoplus_{\X_{i}} S_{i}(\Xb_{i})
      = \bigoplus_{i} Z_{i} \otimes \big( \bigoplus_{\X_{v \backslash i}} 1 \big).$
If $v$ is a product node, then any two children $c_i, c_j$ for $i,j \in \{1,\dots,m\}$ have disjoint scopes,
$\Xb_{i} \cap \Xb_{j} = \varnothing$, and 
$Z_v = \bigoplus_{\X_v} \bigotimes_{i} S_{i}(\Xb_{i}) 
      = \bigoplus_{\X_1} \bigoplus_{\X_{v \backslash 1}} \bigotimes_i S_i(\Xb_i)
      = \bigoplus_{\X_1} S_1(\Xb_1) \otimes \bigoplus_{\X_{v \backslash 1}} \bigotimes_{i=2}^m S_i(\Xb_i)
      = \bigotimes_i \bigoplus_{\X_{i}} S_{i}(\Xb_{i})
      = \bigotimes_i Z_{i}.$
The above equations only require associativity and commutativity of $\oplus$ and 
associativity and distributivity of $\otimes$, which are properties of a semiring.
Thus, any node can be summed over its domain in time linear in the number of its children,
and $S$ can be summed in time linear in its size. \looseness=-1
\end{proof}
\vspace{-0.8em}

%
%
%

\let\OldUrlFont\UrlFont \renewcommand{\UrlFont}{\fontsize{8.5pt}{2pt}\selectfont}

We assume here that
$\bigoplus_{\X_{v \backslash i}} 1$ can be
computed in constant time and that each leaf function can be evaluated in constant time, 
which is true for all semirings considered. 
We also assume that $a \oplus b$ and $a \otimes b$ take constant time for any elements 
$a, b$ of semiring $R$, which is true for most common semirings.
See Appendix~\ref{sec:complexity-sup} for details.
%
%

\renewcommand{\UrlFont}{\OldUrlFont}

The complexity of summation in an SPF can be related to other notions of complexity,
such as treewidth, the most common and relevant complexity measure across the domains we consider.
To define the treewidth of an SPF, we first define junction trees~\citep{Lauritzen1988,Aji2000}
and a related class of SPFs.
\begin{define}
A \emph{junction tree} over variables $\Xb$
is a tuple $(T,Q)$, where $T$ is a rooted tree, $Q$ is a set of subsets of variables,
each vertex $i \in T$ contains a subset of variables
$\Cb_i \in Q$ such that $\cup_i \Cb_i = \Xb$,
and for
every pair of vertices $i,j \in T$ and for all $k \in T$ on the (unique) path
from $i$ to $j$, 
$\Cb_i \cap \Cb_j \subseteq \Cb_k$.
The separator for an edge $(i,j) \in T$
is defined as $\mb{S}_{ij} = \Cb_i \cap \Cb_j$. 
\end{define}
\vspace{-0.1em}

A junction tree provides a schematic for constructing
a specific type of decomposable SPF called a tree-like SPF
(a semiring-generalized version of a construction from~\citet{Darwiche2003}). 
Note that a tree-like SPF is not a tree, however, as many of its nodes have multiple parents.
\begin{define}
A \emph{tree-like SPF} over variables $\Xb$ is constructed from a
junction tree $\mc{T} = (T,Q)$ and functions 
$\{ \psi_i(\Cb_i) \}$ where $\Cb_i \in Q$ and $i \in T$,
and contains the following nodes:
(i) a node $\phi_{vt}$ with indicator 
$\phi_{t}(X_v) = {[X_v = t]}$
for each value $t \in \X_v$ of each variable $X_v \in \Xb$; 
(ii) a (leaf) node $a_i$ with value $\psi_i(\mb{c}_i)$ and a product node $c_i$
for each value $\mb{c}_i \in \X_{\Cb_i}$ of each cluster $\Cb_i$; 
(iii) a sum node $s_{ij}$ for each value $\mb{s}_{ij} \in \X_{\mb{S}_{ij}}$ 
of each separator $\mb{S}_{ij}$, and (iv) a single root sum node $s$.
\vspace{-0.15em}
A product node $c_j$ and a sum node $s_{ij}$ are compatible 
iff their corresponding values are compatible; i.e., $\mb{c}_j \compat \mb{s}_{ij}$.
The nodes are connected as follows.
The children of the root $s$ are all product nodes $c_r$ for $r$ the root of $T$.
The children of product node $c_j$ are all compatible sum nodes $s_{ij}$ 
for each child 
$i$ of $j$,
the constant node $a_j$ 
with value $\psi_j(\mb{c}_j)$, and all indicator nodes 
$\phi_{vt}$ such that $X_v \in \Cb_j$, $t \compat \mb{c}_j$, and $X_v \notin \Cb_k$
for $k$ any node closer to the root of $\mc{T}$ than $j$.
The children of sum node $s_{ij}$ are the 
compatible product nodes $c_i$ 
of child $i$ of $j$ 
connected by separator $\mb{S}_{ij}$.
\end{define}
\noindent 
If $S$ is a tree-like SPF with junction tree $(T,Q)$, 
then it is not difficult to see both that $S$
is decomposable, since the indicators for each variable all appear at the same level,
and that each sum node $s_{jk}$ computes
$S_{s_{jk}}(\mb{S}_{jk}) =
\bigoplus_{( \mb{c} \in \X_{\Cb_j} ) \compat \mb{s}_{jk} } \psi_j( \mb{c} )
        \otimes  [\Cb_j = \mb{c}]
	\otimes \big( \bigotimes_{i \in \text{Ch}( j ) } 
	        S_{s_{ij}}( \mb{c}_{ \mb{S}_{ij}} )
		\big)$,
where the indicator children of 
$c_j$ have been combined into 
$[\Cb_j = \mb{c}]$, $\text{Ch}(j)$ are the children of $j$, 
and $i, j,k \in T$ with $j$ the child of $k$.
Further, $S(\x) = \bigotimes_{i \in T} \psi_i(\x_{\Cb_i})$ for any $\x \in \X$.
Thus, tree-like SPFs provide a method for decomposing an SPF.
For a tree-like SPF to be compatible with an SPF $F$, it
cannot assert independencies that do not hold in $F$.
\looseness=-1

\begin{define}
Let $F(\mb{U})$ be an SPF over variables $\mb{U}$ with pairwise-disjoint
subsets $\Xb, \Yb, \mb{W} \subseteq \mb{U}$. 
Then $\Xb$ and $\Yb$ are \emph{conditionally independent in $F$} given $\mb{W}$ 
iff $F(\Xb, \Yb, \mb{w} ) =  F(\Xb, \mb{w}) \otimes F(\Yb, \mb{w})$ for all $\mb{w} \in \mc{W}$, where
$F(\Xb) = \bigoplus_{\mc{Y}} F(\Xb, \Yb)$ for $\{\Xb,\Yb\}$ a partition of $\mb{U}$.
\end{define}
\vspace{-0.1em}

Similarly, a junction tree $\mc{T} = (T,Q)$ is incompatible with $F$ if it 
asserts independencies that are not in $F$, 
where variables $X$ and $Y$ are conditionally independent in $\mc{T}$ 
given $\mb{W}$ if $\mb{W}$ separates $X$ from $Y$. A set of variables $\mb{W}$ \emph{separates}
$X$ and $Y$ in $\mc{T}$ iff after removing all vertices
$\{ i \in T : \Cb_i \subseteq \mb{W} \}$ from $T$ there is 
no pair of vertices $i,j \in T$ such
that $X \in \Cb_i$, $Y \in \Cb_j$, and $i,j$ are connected.
\looseness=-1
\vspace{-0.15em}

Inference complexity is commonly parameterized by \emph{treewidth},
defined for a junction tree $\mc{T} = (T,Q)$ as the size of the largest cluster minus one;
i.e., $tw(\mc{T}) = \max_{i \in T} |\Cb_i| - 1$.
The treewidth of an SPF $S$
is the minimum treewidth over all junction trees compatible with $S$.
Notice that
these definitions of junction tree and treewidth reduce to the standard
ones~\citep{Kask2005}.
If the treewidth of $S$ is bounded then inference in $S$ is efficient because
there must exist a compatible tree-like SPF that has bounded treewidth.
Note that the trivial junction tree with only a single cluster 
is compatible with every SPF.

\begin{corollary}
Every SPF with bounded treewidth can be summed in time linear in the cardinality of its scope.
\label{cor:JTisSPF}
\end{corollary}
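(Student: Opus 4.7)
The plan is to exhibit, for any SPF $S$ of bounded treewidth $w$, a \emph{tree-like} SPF $S'$ that (i) is compatible with $S$ and (ii) has size linear in $n=|\Xb|$, and then invoke Theorem~\ref{thm:spt} to sum $S'$ in linear time. By the definition of treewidth given just before the corollary, bounded treewidth means there exists a junction tree $\mc{T}=(T,Q)$ compatible with $S$ whose largest cluster has size $w+1$.

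First I would construct cluster potentials $\{\psi_i(\Cb_i)\}_{i \in T}$ so that $S(\x)=\bigotimes_{i\in T}\psi_i(\x_{\Cb_i})$, proceeding by induction on $T$: peel off a leaf cluster $\Cb_\ell$ of $T$, let $\Xb^{\mathrm{priv}}_\ell=\Cb_\ell\setminus\mb{S}_{\ell p}$ be the variables private to that leaf (with parent $p$), set $\psi_\ell(\Cb_\ell)$ to capture the dependence of $S$ on $\Xb^{\mathrm{priv}}_\ell$ given the separator $\mb{S}_{\ell p}$, and continue on the reduced tree. The key point is that compatibility of $\mc{T}$ with $S$ asserts exactly the conditional independencies needed (as formalized by the definition of separation in $\mc{T}$ together with conditional independence in $F$ given in the excerpt), so the factorization exists purely in terms of $\oplus$ and $\otimes$, without any need for a division operation in the semiring.

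Second, I would plug $(\mc{T},\{\psi_i\})$ into the tree-like SPF construction. The excerpt already observes that the resulting $S'$ is decomposable and satisfies $S'(\x)=\bigotimes_{i\in T}\psi_i(\x_{\Cb_i})$, so $S'$ is compatible with $S$. To bound its size, note that with maximum domain size $d$ there are at most $d^{w+1}$ product nodes per cluster and $d^{w+1}$ sum nodes per separator, while a junction tree can always be chosen with $O(n)$ clusters (e.g., one per variable after elimination). Each node has a number of children bounded by a function of $d$ and $w$ alone, so $|S'| = O(n\cdot d^{w+1})$, which is linear in $n$ whenever $w$ and $d$ are bounded. Applying Theorem~\ref{thm:spt} to $S'$ finishes the argument.

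The main obstacle is the cluster-decomposition step: producing the $\psi_i$ inside an arbitrary commutative semiring. In the probabilistic setting this is the usual junction-tree factorization, but with only $\oplus$ and $\otimes$ available one cannot ``divide by separator potentials,'' so the induction must be carried out carefully using the semiring-generalized conditional independence guaranteed by compatibility. Once this is in place, the remaining counting argument and the reduction to Theorem~\ref{thm:spt} are routine.
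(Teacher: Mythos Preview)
Your proposal is correct and follows essentially the same route as the paper: build a compatible tree-like SPF on a minimum-width junction tree, bound its size by $O(n\,d^{w+1})$, and invoke Theorem~\ref{thm:spt}. The only difference is emphasis: the paper's proof simply \emph{assumes} the existence of a compatible tree-like SPF (i.e., it takes the existence of suitable cluster potentials $\{\psi_i\}$ as given by the treewidth hypothesis) and spends all its effort on the edge-counting argument, whereas you make the construction of the $\psi_i$ explicit and correctly flag it as the delicate step in a division-free semiring; your size bound is then sketched more briefly than the paper's.
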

\vspace{-0.2em}

Due to space limitations, all other proofs are provided in Appendix~\ref{sec:proofs-sup}. 
For any SPF, tree-like SPFs are just one type of compatible SPF,
one with size exponential in treewidth;
however, there are many other compatible SPFs.
In fact, there can be compatible (decomposable) SPFs that are
exponentially smaller than any compatible tree-like SPF.

\begin{corollary}
Not every SPF that can be summed in time linear in the cardinality of its scope has bounded treewidth.
\vspace{-\baselineskip}
\label{cor:SPFnotJT}
\end{corollary}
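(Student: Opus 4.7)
The plan is to exhibit, for each $n$, a decomposable SPF over $n$ variables whose size is $O(n)$ but whose treewidth is $\Omega(n)$. By Theorem~\ref{thm:spt} such an SPF can be summed in time linear in the cardinality of its scope, yet it witnesses that bounded treewidth is not necessary for linear-time summation.

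Concretely, I would work in the sum-product semiring $(\R_{\geq 0}, +, \times, 0, 1)$ and build a shallow mixture-of-products SPF
\[
S_n(X_1,\dots,X_n) \;=\; \bigoplus_{k=1}^{K} \bigotimes_{i=1}^{n} \phi_{ki}(X_i),
\]
with $K$ a fixed small constant (e.g., $K=2$) and each $\phi_{ki}$ a univariate function on a finite domain. Each product node has pairwise-disjoint scopes, so $S_n$ is decomposable; it has $O(nK) = O(n)$ edges, so it can be summed in time linear in $n = |\Xb|$ by the sum-product theorem. What remains is to show that the treewidth of $S_n$, as defined in the paper, is $n-1$.

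To control the treewidth, recall that the treewidth of $S_n$ is the minimum over all \emph{compatible} junction trees, and compatibility forbids asserting any conditional independence that does not already hold in $S_n$. I would therefore choose the $\phi_{ki}$ so that $S_n$ admits no nontrivial conditional independence: for every disjoint partition $\Xb = \Xb_A \cup \Xb_B \cup \mb{W}$ with $\Xb_A, \Xb_B$ nonempty, there exists $\mb{w}$ such that $S_n(\Xb_A,\Xb_B,\mb{w}) \neq S_n(\Xb_A,\mb{w}) \otimes S_n(\Xb_B,\mb{w})$. The cleanest route is a genericity argument: each specific conditional independence is a polynomial equation in the parameters $\phi_{ki}$, so the bad parameter set is a proper algebraic subvariety of measure zero, and any parameter vector outside the finite union of these subvarieties works. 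Alternatively, one can fix an explicit instance (e.g., a mixture of two biased product distributions over binary variables with distinct bias vectors) and verify non-factorization directly by a short calculation.

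Once no nontrivial conditional independence holds in $S_n$, any compatible junction tree $\mc{T} = (T,Q)$ cannot separate any pair of variables by any subset of the remaining ones. This forces $T$ to consist of a single cluster $\Cb = \Xb$, giving $tw(\mc{T}) = n-1$; hence the treewidth of $S_n$ is $n-1$, which is unbounded as $n \to \infty$. The main obstacle is the genericity step — ruling out \emph{all} conditional independencies rather than just marginal ones — but this reduces to the standard observation that finite mixtures of product distributions are almost never themselves product distributions on any nontrivial marginal, which can be established by a direct computation on the chosen instance.
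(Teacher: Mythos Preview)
Your proposal is correct and follows essentially the same approach as the paper: the paper also exhibits the mixture-of-products SPF $F(\Xb) = \bigoplus_{j=1}^r \bigotimes_{i=1}^n \psi_{ji}(X_i)$, notes it is decomposable of size $O(n)$, and argues that for generic $\psi_{ji}$ no nontrivial conditional independence holds, forcing the only compatible junction tree to be the single clique on $\Xb$. Your genericity argument (via algebraic varieties or an explicit instance) is in fact more careful than the paper's, which simply asserts that ``in general, each $\psi_{ji}$ is different, so there are no such factors.''
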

\vspace{-0.2em}

Given existing work on tractable high-treewidth inference, it is
perhaps surprising that the above results do not exist in the
literature at this level of generality.
Most relevant is the preliminary work of \citet{Kimmig2012}, which proposes
a semiring generalization of arithmetic circuits for
knowledge compilation and does not address learning. 
Their main results show that summation of 
circuits that are both decomposable and either deterministic or based on an idempotent
sum takes time linear in their size,
whereas we show that decomposability alone is sufficient, a much weaker condition. 
In fact, over the same set of variables,
deterministic circuits may be exponentially larger and are never smaller than
non-deterministic circuits~\citep{Darwiche2002,Kimmig2012}. 
We note that while decomposable circuits can be made deterministic by introducing
hidden variables, this does not imply that these properties are equivalent.
\vspace{-0.2em}

%
%
%
%
%
Even when restricted to specific semirings, such as those for logical and
probabilistic inference (e.g., \citet{Darwiche2001,Darwiche2003,Poon2011}), 
some of our results have not previously been shown formally,
although some have been foreshadowed informally. 
Further, existing semiring-specific results (discussed further below)
do not make it clear that the semiring properties are all
that is required for tractable high-treewidth inference.
Our results are thus simpler and more general.
Further, the sum-product theorem
provides the basis for general algorithms for
inference in arbitrary SPFs (Section~\ref{sec:inference})
and for learning tractable high-treewidth representations 
(i.e., decomposable SPFs) in any semiring (Section~\ref{sec:learning}).


\vspace{-0.6em}
\section{Inference in non-decomposable SPFs}
\label{sec:inference}

Inference in arbitrary SPFs can be performed
in a variety of ways, some more efficient than others.
We present an algorithm for summing an SPF that adapts to the structure of the SPF
and can thus take exponentially less time
than constructing and summing a compatible tree-like SPF~\citep{Bacchus2009}, 
which imposes a uniform decomposition structure.
SPF $S$ with root node $r$ is summed by 
calling \Call{SumSPF}{$r$}, for which
pseudocode is shown in Algorithm~\ref{alg:sumspf}. \alg{SumSPF}
is a simple recursive algorithm for summing an SPF 
(note the similarity between its structure and the proof of the sum-product theorem).
If $S$ is decomposable, then \alg{SumSPF} simply recurses to the 
bottom of $S$, sums the leaf functions, and evaluates $S$ in an upward pass.
If $S$ is not decomposable, \alg{SumSPF}
decomposes each product node it encounters while summing $S$.
\looseness=-1

\newcommand*\Let[2]{#1 $\gets$ #2}
\algrenewcommand\algorithmicrequire{\textbf{Input:}}
\algrenewcommand\algorithmicensure{\textbf{Output:}}
\renewcommand{\algorithmiccomment}[1]{\bgroup\hfill\footnotesize //~\emph{{#1}}\egroup}

\algnewcommand{\IIf}[1]{\State\algorithmicif\ #1\ \algorithmicthen}
\algnewcommand{\ElseI}[1]{\State\algorithmicelse\ #1\ }
\algnewcommand{\EndIIf}{\unskip\ }

\algrenewcommand\algorithmicindent{0.8em}%

\newcommand{\ev}{\emph{sum}}

\makeatletter
\newcommand{\mybox}[2][\fboxsep]{{%
  \setlength{\fboxsep}{#1}\fbox{\m@th$\displaystyle#2$}}}
\makeatother

\begin{algorithm}[tbh]
  \caption{Sum an SPF.}
  \label{alg:sumspf} 
  \begin{algorithmic}[1]
    \Require{node $v$, the root of the sub-SPF $S_v(\Xb_v)$ }
    \Ensure{\ev, which is equal to $\bigoplus_{\vb \in \X_v} S_v(\vb)$ } 
    \vspace{0.2em}
    \Function{\sumspf}{$v$}
      \IIf{$\tuple{v,\text{\ev}}$ in cache} \Return{\ev} \EndIIf
      \If{$v$ is a sum node} \Comment{$\Xb_{v \backslash c} = \Xb_v \backslash \Xb_c$}
        \State \Let{\ev}{$\bigoplus_{c \in \text{Ch}(v)} \Call{\sumspf}{c} \otimes 
               \bigoplus_{\X_{v \backslash c}}1$} \label{alg1:sum}
        \vspace{-0.2em}
      \ElsIf{$v$ is a product node}
        \If{$v$ is decomposable} 
          \State \Let{\ev}{$\bigotimes_{c \in \text{Ch}(v)}$ \Call{\sumspf}{$c$}}
           \label{alg1:prod}
        \Else
          \State \Let{\ev}{\Call{\sumspf}{~\alg{Decompose}($v$)~}}
        \EndIf
        \vspace{-0.2em}
      \Else \Comment{$v$ is a leaf with constant $a$ or function $\phi_v$}
        \IIf{$v$ is a constant} \Let{\ev}{$a$}
        \ElseI \ev $\gets$ $\bigoplus_{x_j \in \X_j} \phi_v(x_j)$ 
        \EndIIf
      \EndIf
      \vspace{-0.4em}
      \State cache $\tuple{v, \text{\ev}}$
      \State\Return{\ev}
      \vspace{0.4em}
    \EndFunction
  \end{algorithmic}
\end{algorithm}
%
%

\begin{algorithm}[tbh]
    \caption{Decompose a product node.}
\label{alg:decompose}
  \begin{algorithmic}[1]
    \Require{product node $v$, with children $\{c\}$}
    \Ensure{node $s$, such that its children are decomposable with respect to $X_t$ and $S_s, S_v$ are compatible}
    \Function{Decompose}{$v$} 
    \State \Let{$X_t$}{choose var. that appears in multiple $\Xb_c$}
    \State \Let{$\Xb_{v \backslash t}$}{$\Xb_v \backslash \{ X_t \}$}
    \State \Let{$s$}{create new sum node}
    \ForAll{$x_{i} \in \X_t$}       
      \State create simplified $S_{v_i}(\Xb_{v \backslash t}) \gets S_v(\Xb_{v \backslash t}, x_{i})$
      \State set $v_i$ as child of $s$ \Comment{$v_i$ is the root of $S_{v_i}$}
      \State set $f(X_t) = [X_t = x_{i}]$ as child of $v_i$
    \EndFor
    \State set $s$ as a child of each of $v$'s parents
    \State remove $v$ and all edges containing $v$
    \State \Return $s$
    \vspace{0.4em}
    \EndFunction
  \end{algorithmic}
\end{algorithm}

Decomposition can be achieved in many different ways, but we base our method
on a common algorithmic pattern that already occurs in many of the inference problems we consider,
resulting in
a general, semiring-independent algorithm for summing any SPF.
%
%
\Call{Decompose}{}, shown in Algorithm~\ref{alg:decompose}, 
chooses a variable $X_t$ that
appears in the scope of multiple of $v$'s children;
creates $|\X_t|$ partially assigned and simplified copies $S_{v_i}$ of 
the sub-SPF $S_v$ for $X_t$ assigned to each value $x_i \in \X_t$;
multiplies each $S_{v_i}$ by an
indicator to ensure that only one is ever non-zero when $S$ is evaluated;
and then replaces $v$ with a sum over $\{ S_{v_i} \}$.
Any node $u \in S_v$ that does not have $X_t$ in its scope is re-used
across each $S_{v_i}$, which can drastically limit the amount of duplication that occurs. 
Furthermore, each $S_{v_i}$ is simplified by removing any nodes that became $0$
when setting $X_t = x_i$. 
Variables are chosen heuristically; a good heuristic minimizes the amount of duplication
that occurs. 
Similarly, \Call{SumSPF}{} heuristically orders the children in 
lines~\ref{alg1:sum} and~\ref{alg1:prod}. A good ordering will first
evaluate children that may return an absorbing value
(e.g., $0$ for $\otimes$) 
because
\Call{SumSPF}{} can break out of these lines if this occurs.
In general, decomposing
an SPF is hard, and the resulting decomposed SPF may be exponentially larger
than the input SPF, although good heuristics can often avoid this.
Many extensions to \alg{SumSPF} are also possible, some of which we detail in later sections. 
Understanding inference in non-decomposable SPFs
is important for future work on extending SPF learning to even more challenging
classes of functions, particularly those without obvious 
decomposability structure.


\vspace{-0.6em}
\section{Learning tractable representations}
\label{sec:learning}

\newcommand{\xbi}{\x^{(i)}}
\newcommand{\yii}{y^{(i)}}
\newcommand{\ybi}{\y^{(i)}}

Instead of performing inference in an intractable model, it can often be simpler to 
learn a tractable representation directly from data (e.g.,~\citet{Bach2001,Gens2013}).
The general problem we consider 
is that of learning 
a decomposable SPF $S : \X \rightarrow R$ on a semiring $(R, \oplus, \otimes, 0, 1)$
from a set of i.i.d. instances
$T = \{ (\xbi, \yii)\}$ drawn from a fixed distribution ${D}_{\X \times R}$, where
$\yii = \bigoplus_{\Z} F(\xbi, \Zb)$, $F$ is some (unknown) SPF, and
$\Zb$ is a (possibly empty) set of unobserved variables or parameters, such that 
$S(\xbi) \approx \yii$, for all $i$. After learning, $\bigoplus_{\X} S(\Xb)$ can be
computed efficiently.
In the sum-product semiring, this corresponds to summation (or integration), for which
estimation of a joint probability distribution over $\Xb$ is a special case.

For certain problems, such as constraint satisfaction or MPE inference, the desired
quantity is the argument of the sum. 
This can be recovered (if meaningful in the current semiring) from an SPF by
a single downward pass that recursively selects all children of a
product node and the (or a) active child of a sum node (e.g., the child with the smallest
value if minimizing). 
Learning for this domain corresponds to a generalization of learning
for structured prediction~\citep{Taskar2005}. Formally,
the problem is to learn an SPF $S$ from 
instances $T = \{ (\xbi, \ybi)\}$, where $\ybi = \argoplus_{\y \in \Y} F(\xbi, \y)$, such that
$\argoplus_{\y \in \Y} S(\xbi, \y) \approx \ybi$, for all $i$. Here, $\xbi$ can be an arbitrarily
structured input and inference is over the variables $\Yb$.
Both of the above learning problems can be solved by the algorithm schema we present,
with minor differences in the subroutines. We focus here on the former but
discuss the latter below, alongside experiments on learning nonconvex functions that, by 
construction, can be efficiently optimized.
\looseness=-1
\vspace{-0.2em}

As shown by the sum-product theorem, the key to tractable inference is to identify
the decomposability structure of an SPF.
The difficulty, however, is that in general this structure varies throughout the space.
For example, as a protein folds there exist conformations of the protein in
which two particular amino acids are energetically independent (decomposable), and
other conformations in which these amino acids directly interact, but in which other amino
acids may no longer interact.
This suggests a simple algorithm, which we call \alg{LearnSPF} (shown in Algorithm~\ref{alg:learnspf}), 
that first tries to identify a decomposable partition of the variables and, 
if successful, recurses on each subset of variables in order to 
find finer-grained decomposability.
Otherwise, \alg{LearnSPF} clusters the training
instances, grouping those with analogous decomposition structure, and recurses
on each cluster.
Once either the set of variables is small enough to be summed
over (in practice, unary leaf nodes are rarely necessary) or the number
of instances is too small to contain meaningful statistical information, \alg{LearnSPF}
simply estimates an SPF $S(\Xb)$ such that $S(\xbi) \approx \yii$ for all $i$ in
the current set of instances.
\alg{LearnSPF} is a generalization of LearnSPN~\citep{Gens2013}, a simple but 
effective SPN structure learning algorithm. 

\begin{algorithm}[t]
  \caption{Learn a decomposable SPF from data.}
  \label{alg:learnspf} 
  \begin{algorithmic}[1]
    \Require{a dataset $T = \{(\xbi, \yii)\}$ over variables $\Xb$}
    \Require{integer thresholds $t, v > 0$}
    \Ensure{$S(\Xb)$, an SPF over the input variables $\Xb$} 
    \Function{LearnSPF}{$T, \Xb$}
      \If{$|T| \leq t$ or $|\Xb| \leq v$}
        \State estimate $S(\Xb)$ such that $S(\xbi) \approx \yii$ for all $i$
      \Else
      \State decompose $\Xb$ into disjoint subsets $\{\Xb_i\}$ 
      \If{$|\{\Xb_i\}| > 1$}
        \State \Let{$S(\Xb)$}{$\bigotimes_{i} \Call{LearnSPF}{T, \Xb_i$}}
      \Else
        \State cluster $T$ into subsets of similar instances $\{T_j\}$
        \State \Let{$S(\Xb)$}{$\bigoplus_{j} \Call{LearnSPF}{T_j, \Xb$}}
      \EndIf
      \EndIf
      \vspace{-0.4em}
      \State \Return $S(\Xb)$
    \EndFunction
  \end{algorithmic}
\end{algorithm}

\alg{LearnSPF} is actually an algorithm schema 
that can be instantiated with different
variable partitioning, clustering, and leaf creation subroutines 
for different
semirings and problems.
To successfully decompose the variables, \alg{LearnSPF} must find a partition  
$\{\Xb_1, \Xb_2\}$ of the variables $\Xb$ such that 
$\bigoplus_{\X}S(\Xb) \approx (\bigoplus_{\X_1} S_1(\Xb_1)) \otimes (\bigoplus_{\X_2} S_2(\Xb_2))$.
We refer to this as \emph{approximate} decomposability. In probabilistic inference,
mutual information or pairwise independence tests can be used to determine
decomposability~\citep{Gens2013}. For our experiments,
decomposable partitions correspond to the connected components of a graph over 
the variables in which correlated variables are connected.
%
Instances can be clustered by virtually any clustering algorithm, such as
a naive Bayes mixture model or $k$-means, which we use in 
our experiments. Instances can also be split by conditioning on specific values
of the variables, as in \alg{SumSPF} or
in a decision tree.
Similarly, leaf functions can be estimated using any appropriate learning algorithm,
such as linear regression or kernel density estimation.
\vspace{-0.1em}

In Section~\ref{sec:experiments}, we present preliminary experiments on
learning nonconvex functions that can be globally optimized in polynomial time. 
However, this is just one particular application of \alg{LearnSPF}, which can be
used to learn a tractable representation for any problem that consists
of summation over a semiring.
In the following section, we briefly discuss common inference problems
that correspond to summing an SPF on a specific semiring. 
For each, we demonstrate the benefit of the sum-product theorem, relate its core
algorithms to \alg{SumSPF}, and specify the problem solved by \alg{LearnSPF}.
Additional details and semirings can be found in the Appendix.
Table~\ref{table:spfs} provides a summary of some of the relevant inference problems.
\looseness=-1
\newcommand{\myITwidth}{1.8cm}
\newcommand{\myVarWidth}{1.9cm}
\newcommand{\myLeafWidth}{3.2cm}
\newcommand{\myAlgWidth}{2.0cm}
\begin{table*}[tbh!]
\centering
\vspace{-1mm}
\caption{\small Some of the inference problems that correspond to summing an SPF on a specific semiring,
with 
details on the variables and leaf
functions and a core algorithm that is an instance of \alg{SumSPF}.
$\B = \{0,1\}$.
$\N$ and $\R$ denote the natural and real numbers. 
Subscript $+$ denotes the restriction to non-negative numbers
and subscript $(-)\infty$ denotes the inclusion of (negative) $\infty$.
$\mb{U}_m$ denotes the universe of relations of arity up to $m$ 
(see Appendix~\ref{sec:relational-sup}). 
$\N[\Xb]$ denotes the polynomials with coefficients from $\N$.
See the Appendix for information on MPE-SAT~\citep{Sang2007} and Generic-Join~\citep{Ngo2014}.
} \label{table:semirings}
\begin{tabular}{ l  l p{3.4cm}  l  l  l }
\toprule
\textbf{Domain} & \textbf{Inference task} & \textbf{Semiring} & \textbf{Variables} & \textbf{Leaf functions} & \textbf{\textsc{SumSPF}}\\ 
\toprule
\multirow{2}{\myITwidth}{Logical \\~inference} & SAT & $(\B, \vee, \wedge, 0, 1)$ &  \multirow{1}{\myVarWidth}{Boolean} & \multirow{1}{\myLeafWidth}{Literals} & DPLL \\
& \#SAT & $(\N, +, \times, 0, 1)$ & Boolean & Literals & \#DPLL \\ 
& MAX-SAT & $(\N_{-\infty}, \max, +, -\infty, 0)$ & Boolean & \multirow{1}{\myLeafWidth}{Literals}  & MPE-SAT \\
\midrule
\multirow{2}{\myITwidth}{Constraint \\~satisfaction} & CSPs & $(\B, \vee, \wedge, 0, 1)$ & \multirow{1}{\myVarWidth}{Discrete} & \multirow{1}{\myLeafWidth}{Univariate constraints} & \multirow{1}{\myAlgWidth}{Backtracking} \\
& Fuzzy CSPs & $([0,1], \max, \min, 0, 1)$ & Discrete  & Univariate constraints & ~~~~~~- \\
& Weighted CSPs & $(\R_{+,\infty}, \min, +, \infty, 0)$ & Discrete & Univariate constraints & ~~~~~~- \\ 
\midrule
\multirow{2}{\myITwidth}{Probabilistic \\~inference} & Marginal & $(\R_+, +, \times, 0, 1)$ & \multirow{1}{\myVarWidth}{Discrete} & Potentials & \multirow{1}{\myAlgWidth}{Recursive \\~conditioning} \\
& MPE & $(\R_+, \max, \times, 0, 1)$ & Discrete & Potentials  &  \\ 
\midrule
\multirow{2}{\myITwidth}{{Continuous \\~functions}} & Integration & $(\R_+, +, \times, 0, 1)$ & Continuous & \multirow{1}{\myLeafWidth}{Univariate functions} & ~~~~~~- \\
& Optimization & $(\R_{\infty}, \min, +, \infty, 0)$ & Continuous & Univariate functions & RDIS \\
\midrule
\multirow{2}{\myITwidth}{{Relational \\~databases}} 
& Unions of CQs & $(\mb{U}_m, \cup, \bowtie, \varnothing, \JI)$ & \multirow{1}{\myVarWidth}{Sets of tuples} & \multirow{1}{\myLeafWidth}{Unary tuples}  & Generic-Join \\
& Provenance & $(\N[\Xb], +, \times, 0, 1)$ & Discrete & $K$-relation tuples  & ~~~~~~- \\
\bottomrule
\label{table:spfs}
\end{tabular}
\vspace{-2.5em}
\end{table*}

\vspace{-0.6em}
\section{Applications to specific semirings}
\vspace{-0.35em}
\subsection{Logical inference}
\label{sec:sat}


\vspace{-0.3em}
Consider the Boolean semiring $\mc{B} = (\B, \vee, \wedge,0,1)$, where $\B = \{0,1\}$, $\vee$ is 
logical disjunction (OR), and $\wedge$ is logical conjunction (AND). 
If each variable is Boolean and leaf functions are literals 
(i.e., each $\phi_l(X_j)$ is $X_j$ or $\neg X_j$, where $\neg$ is logical negation),
then SPFs on $\mc{B}$ correspond exactly to negation normal form (NNF),
a DAG-based representation of a propositional formula (sentence)~\citep{Barwise1982}.
An NNF can be exponentially smaller than the same sentence in a standard (flat)
representation such as conjunctive or disjunctive normal form (CNF or DNF),
and is never larger~\citep{Darwiche2002}.
Summation of an NNF $F(\Xb)$ on $\mc{B}$ is $\bigvee_{\X} F(\Xb)$,
which corresponds to propositional satisfiability (SAT):
the problem of determining if there exists 
a satisfying assignment for $F$.
Thus, the tractability of SAT for decomposable NNFs
follows directly from the sum-product theorem.

\setcounter{citedcorollary}{2}
\addtocounter{corollary}{1}
\begin{citedcorollary}[\citep{Darwiche2001}]
The satisfiability of a decomposable NNF is decidable in time linear in its size.
\label{cor:sat}
\end{citedcorollary}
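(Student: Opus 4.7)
The plan is to exhibit this corollary as an immediate specialization of the sum-product theorem (Theorem~\ref{thm:spt}) to the Boolean semiring $\mc{B} = (\B, \vee, \wedge, 0, 1)$, with essentially no new work beyond matching definitions.

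First I would make explicit the correspondence between NNFs and SPFs on $\mc{B}$. A propositional sentence in NNF is a DAG whose leaves are literals and whose internal nodes are labeled $\vee$ or $\wedge$, which by the paper's definition is exactly an SPF over $(\mc{B}, \Xb, \Phi)$ where each $\phi_l \in \Phi$ is either the univariate literal $\phi_l(X_j) = X_j$ or $\phi_l(X_j) = \neg X_j$ (both mapping $\B \to \B$). Under this identification, decomposability of the NNF (children of each $\wedge$-node have disjoint scopes) coincides with decomposability of the SPF. Next I would observe that the satisfiability of $F$ amounts to computing $\bigvee_{\x \in \X} F(\x)$: this disjunction evaluates to $1$ iff some assignment $\x$ satisfies $F$ and to $0$ otherwise, which is precisely the semiring summation in $\mc{B}$.

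Given these two observations, the corollary follows immediately from Theorem~\ref{thm:spt}: since the NNF, viewed as an SPF, is decomposable, its sum $\bigvee_{\X} F(\Xb)$ can be computed in time linear in its size. Determining satisfiability therefore takes linear time in the size of the NNF.

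The only step requiring any care is verifying that the linear-time bound is genuine in this semiring, i.e., that the assumptions discussed after the proof of Theorem~\ref{thm:spt} hold for $\mc{B}$: each $a \vee b$ and $a \wedge b$ takes $O(1)$ time, each literal leaf is evaluated in $O(1)$ time, and the quantity $\bigvee_{\X_{v\setminus i}} 1 = 1$ is trivially constant. All of these are immediate for the Boolean semiring, so there is no real obstacle; the ``hard part,'' if anything, is simply noting that decomposability of an NNF as defined in the literature and decomposability of an SPF as defined here refer to the same syntactic condition on $\wedge$-nodes.
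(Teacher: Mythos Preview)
Your proposal is correct and matches the paper's approach exactly: the paper establishes the NNF/SPF correspondence on $\mc{B}$, identifies SAT with $\bigvee_{\X} F(\Xb)$, and then states that the corollary ``follows directly from the sum-product theorem,'' which is precisely what you do. Your explicit check that the constant-time assumptions hold in $\mc{B}$ is a small addition but entirely in line with the paper's remarks following Theorem~\ref{thm:spt}.
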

\vspace{-0.4em}

Satisfiability of an arbitrary NNF can be determined either by decomposing the NNF
or by expanding it to a CNF and using a SAT solver. DPLL~\citep{Davis1962}, the standard algorithm
for solving SAT, is an instance of \alg{SumSPF} (see also \citet{Huang2007}).
Specifically, DPLL is a recursive algorithm that at each level
chooses a variable $X \in \Xb$ for CNF $F(\Xb)$ and
computes $F = F|_{X=0} \vee F|_{X=1}$ by recursing on each disjunct, 
where $F|_{X=x}$ is $F$ with $X$ assigned value $x$.
Thus, each level of recursion of DPLL corresponds  to a call to \alg{Decompose}.

\vspace{-0.2em}

Learning in the Boolean semiring is a well-studied area, which includes problems
from learning Boolean circuits~\citep{Jukna2012} 
(of which decomposable SPFs are a restricted subclass, known as 
syntactically multilinear circuits) 
to learning sets of rules~\citep{Rivest1987}. However, learned rule sets are typically
encoded in large CNF knowledge bases, making reasoning over them intractable.
In contrast, decomposable NNF is a tractable but expressive formalism for knowledge
representation that supports a rich class of polynomial-time logical operations, 
including SAT~\citep{Darwiche2001}. 
Thus, \alg{LearnSPF} in this semiring
provides a method for learning large, complex knowledge bases that are encoded in decomposable NNF
and therefore support efficient querying, which could greatly benefit existing rule learning systems.

\vspace{-0.7em}
\subsection{Constraint satisfaction.}
\vspace{-0.2em}
A constraint satisfaction problem (CSP) consists of a set of constraints $\{C_i\}$
on variables $\Xb$, where each
constraint $C_i(\Xb_i)$ 
specifies the satisfying 
assignments to its variables.
Solving a CSP consists of finding an assignment to $\Xb$ that satisfies each constraint.
When constraints are functions $C_i : \X_i \rightarrow \B$ 
that are $1$ when $C_i$ is satisfied and $0$ otherwise, then
$F(\Xb) = \bigwedge_{i} C_i(\Xb_i) =
\bigwedge_{i} \bigvee_{\x_i \in \X_i} \big( C_i(\x_i) \wedge [\Xb_i = \x_i] \big) =
\bigwedge_{i} \bigvee_{\x_i \in \X_i} \big( C_i(\x_i) \wedge \bigwedge_{X_t \in \Xb_i} [X_t = \x_{it}] \big)$ 
is a CSP 
and $F$ is an SPF on the Boolean semiring $\mc{B}$, 
i.e., an OR-AND network (OAN), a generalization of NNF, 
and a decomposable CSP is one with a decomposable OAN.
Solving $F$ corresponds to computing $\bigvee_{\X} F(\Xb)$, which is summation
on $\mc{B}$ (see also \citet{Bistarelli1997,Chang2005,Rollon2013}). 
The solution for $F$ can be recovered with a downward pass that recursively
selects the (or a) non-zero child of an OR node, and all children of an AND node.
Corollary~\ref{cor:scsp} follows immediately.
\looseness=-1

\begin{corollary}
Every decomposable CSP can be solved in time linear in its size.
\label{cor:scsp}
\end{corollary}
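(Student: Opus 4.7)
The plan is to reduce the corollary to two linear-time passes over the decomposable OAN: an upward pass that decides satisfiability via the sum-product theorem, and a downward pass that extracts a satisfying assignment when one exists.

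For the upward pass, I would first observe that the OAN representation $F(\Xb) = \bigwedge_i \bigvee_{\x_i} \big(C_i(\x_i) \wedge \bigwedge_{X_t \in \Xb_i}[X_t = \x_{it}]\big)$ is by construction an SPF on $\mc{B}$, and that $F$ is satisfiable iff $\bigvee_{\Xb} F(\Xb) = 1$. Since $F$ is decomposable, Theorem~\ref{thm:spt} applies directly: the global OR can be evaluated in time linear in the size of the network by computing, at each node, the OR or AND of its children's previously-cached values (leaf indicators contribute their unary OR over the domain, which is constant time). This already settles unsatisfiable instances.

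If the root evaluates to $1$, I would perform the downward pass described in the paragraph preceding the corollary: starting at the root, at every OR node choose any child whose cached value is $1$ and recurse into it, and at every AND node recurse into every child. At each leaf, the associated indicator $[X_t = x]$ fixes the value of a variable. The output assignment is read off from the selected leaves. Each edge of the OAN is traversed at most once during this descent (with caching preventing revisits), so the pass is again linear in the size of $F$.

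The one thing that actually requires the decomposability hypothesis — and is the key step — is the claim that the downward pass produces a \emph{consistent} assignment. Here I would argue: at any AND node $v$ we descend into all children, but since $v$ is decomposable the scopes of the sub-SPFs rooted at its children are pairwise disjoint, so no variable $X_t$ receives two conflicting indicator assignments from different subtrees; at any OR node we descend into exactly one child, so within that subtree each variable is assigned at most once. Combined with the fact that the selected child of every OR node has cached value $1$ and that an AND of $1$'s is $1$, a straightforward induction on the depth of the DAG shows that the collected indicators jointly satisfy $F$. This is the place where the proof would break down without decomposability — inconsistent choices in sibling subtrees of an AND node could yield ``$1$'' at the root while admitting no global assignment — and it is essentially the same structural property that powers Theorem~\ref{thm:spt}. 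The two linear-time passes together give the stated bound.
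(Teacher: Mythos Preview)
Your proposal is correct and matches the paper's approach exactly: the paper states that the corollary ``follows immediately'' from the sum-product theorem together with the downward pass (select a non-zero child at each OR node, all children at each AND node) described in the preceding paragraph, and you have simply spelled out that argument in full, including the consistency check that decomposability buys at AND nodes. Your write-up is in fact more detailed than what the paper provides, but the route is the same.
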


\vspace{-0.3em}
Thus, for inference to be efficient it suffices that the CSP be expressible by a tractably-sized 
decomposable OAN; a much weaker condition than that of low treewidth.
%
Like DPLL, backtracking-based search algorithms~\citep{Kumar1992} for CSPs are
also instances of \alg{SumSPF}
(see also \citet{Mateescu2005}).
Further, SPFs on a number of other semirings correspond to various extensions
of CSPs, including fuzzy, probabilistic, and weighted CSPs (see
Table~\ref{table:spfs} and~\citet{Bistarelli1997}).

\alg{LearnSPF} for CSPs addresses a
variant of structured prediction~\citep{Taskar2005}; specifically,
learning a function $F : \Xb \rightarrow \B$ such that
$\argOR_{\y} F(\xbi, \y) \approx \ybi$ for training data $\{(\xbi, \ybi)\}$,
where $\xbi$ is a structured object representing a CSP
and $\ybi$ is its solution. 
\alg{LearnSPF} solves this problem
while guaranteeing that the learned CSP remains tractable.
This is a much simpler and more attractive approach than existing 
constraint learning methods such as \citet{Lallouet2010}, which uses inductive logic
programming and has no tractability guarantees.
\looseness=-1

\vspace{-0.6em}
\subsection{Probabilistic inference}
\label{sec:prob}
\vspace{-0.2em}

Many probability distributions can be compactly represented as graphical
models: $P(\Xb) = \frac{1}{Z} \prod_i \psi_i(\Xb_i)$, where $\psi_i$
is a potential over variables $\Xb_i \subseteq \Xb$
and $Z$ is 
known as the partition function~\citep{Pearl1988}. One of
the main inference problems in graphical models is to compute the probability
of evidence $\mb{e} \in \X_E$ for variables $\Xb_E \subseteq \Xb$, 
$P(\mb{e}) = \sum_{\X_{\ol{E}}} P(\mb{e}, \Xb_{\ol{E}})$, where $\Xb_{\ol{E}} = \Xb
\backslash \Xb_E$. The partition function $Z$ is the unnormalized probability
of empty evidence ($\Xb_E = \varnothing$). 
Unfortunately, computing $Z$ or $P(\mb{e})$ is generally intractable.
Building on a number of earlier works~\citep{Darwiche2003,Dechter2007,Bacchus2009},
\citet{Poon2011}
introduced sum-product networks (SPNs), a class of distributions in which
inference is guaranteed to be tractable. 
An SPN is an SPF on the non-negative real sum-product
semiring $(\R_+, +, \times, 0, 1)$. A graphical model is a flat SPN, in the
same way that a CNF is a flat NNF~\citep{Darwiche2002}.
For an SPN $S$, the unnormalized probability of evidence $\mb{e} \in \X_E$
for variables $\Xb_E$ is computed by replacing each leaf function
$\phi_l \in \{\phi_l(X_j) \in S | X_j \in \Xb_E \}$ with the constant
$\phi_l(\mb{e}_j)$ and summing the SPN. The corollary below follows
immediately from the sum-product theorem.

\begin{corollary}
The probability of evidence in a decomposable SPN can be computed in time linear in its size.
\label{cor:probinf}
\end{corollary}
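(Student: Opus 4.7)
The plan is to observe that computing the probability of evidence in a decomposable SPN reduces, by construction, to summing a decomposable SPF on the semiring $(\R_+, +, \times, 0, 1)$, so the result is essentially an immediate consequence of Theorem~\ref{thm:spt}.

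First, I would verify that $(\R_+, +, \times, 0, 1)$ is a commutative semiring: addition and multiplication are associative and commutative with identities $0$ and $1$; multiplication distributes over addition; and $0$ is absorbing under $\times$. Hence any SPN $S$ is indeed an SPF on this semiring, and decomposability is defined in exactly the same way as for any SPF.

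Second, I would formalize the evidence substitution. Given evidence $\mb{e} \in \X_E$, construct an SPF $S^{\mb{e}}$ from $S$ by replacing each leaf function $\phi_l(X_j)$ with $X_j \in \Xb_E$ by the constant $\phi_l(\mb{e}_j) \in \R_+$, and leaving every other node unchanged. Then the (unnormalized) probability of evidence is
\[
  P(\mb{e}) \;=\; \sum_{\X_{\ol{E}}} \prod_i \psi_i(\mb{e}, \Xb_{\ol{E}}) \;=\; \bigoplus_{\X_{\ol{E}}} S^{\mb{e}}(\Xb_{\ol{E}}),
\]
i.e., the semiring summation of $S^{\mb{e}}$ over its remaining (non-evidence) scope $\Xb_{\ol{E}}$.

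Third, I would argue that $S^{\mb{e}}$ is itself a decomposable SPF. The substitution only relabels leaf nodes with constants; it neither adds, removes, nor rewires any edges, nor does it enlarge the scope of any product node. Hence if every product node of $S$ had disjoint-scoped children, the same holds in $S^{\mb{e}}$ (scopes can only shrink), so decomposability is preserved. Moreover, the size of $S^{\mb{e}}$ equals the size of $S$. Applying Theorem~\ref{thm:spt} to $S^{\mb{e}}$ yields $P(\mb{e})$ in time linear in the size of $S$.

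The only subtlety worth spelling out is that the substitution really does preserve decomposability and does not inflate the graph; once this is noted, the corollary follows directly from the sum-product theorem without any semiring-specific argument.
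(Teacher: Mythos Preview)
Your proposal is correct and is exactly the approach the paper takes: the paper simply states that the corollary ``follows immediately from the sum-product theorem'' after noting that the probability of evidence is obtained by replacing each evidence-variable leaf $\phi_l(X_j)$ with the constant $\phi_l(\mb{e}_j)$ and summing. Your write-up just makes explicit the (routine) checks that $(\R_+,+,\times,0,1)$ is a commutative semiring and that the evidence substitution preserves decomposability and size, which is precisely the content the paper leaves implicit.
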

\vspace{-0.2em}

A similar result (shown in Appendix~\ref{sec:prob-sup}) for finding the most probable state of the 
non-evidence variables also follows from the sum-product theorem. 
One important consequence of the sum-product theorem 
is that decomposability is the sole condition required for
an SPN to be tractable; previously, completeness was also required \citep{Poon2011,Gens2013}.
This 
expands the range of tractable SPNs and simplifies the
design of tractable representations based on them, such as tractable
probabilistic knowledge bases~\citep{Domingos2012}.
\looseness=-1
\vspace{-0.1em}

Most existing algorithms for inference in graphical models correspond to different
methods of decomposing a flat SPN, and can be loosely clustered into 
tree-based, conditioning, and compilation methods, all of which \alg{SumSPF} generalizes.
Details are provided in Appendix~\ref{sec:prob-sup}.
\vspace{-0.3em}

\alg{LearnSPF} for SPNs corresponds to learning a probability distribution
from a set of samples $\{(\xbi, \yii)\}$. Note that $\yii$ in this
case is defined implicitly by the empirical frequency of $\xbi$ in the dataset.
Learning the parameters and structure of SPNs is a fast-growing area of research
(e.g.,~\citet{Gens2013,Rooshenas2014,Peharz2014,Adel2015}), 
and we refer readers to these references for more details. 

\vspace{-0.75em}
\subsection{Integration and optimization}
\label{sec:continuous}
\vspace{-0.35em}

SPFs can be generalized to continuous (real) domains,
where each variable $X_i$ has domain $\X_i \subseteq \R$ and the semiring set
is a subset of $\R_\infty$.
For the sum-product theorem to hold, the only additional conditions are that
(C1) $\bigoplus_{X_j} \phi_l(X_j)$ is computable in constant time for all leaf functions,
and (C2) $\bigoplus_{\X_{v \backslash c}} 1 \neq \infty$ 
for all sum nodes $v \in S$ and all children $c \in \text{Ch}(v)$, 
where $\X_{v \backslash c}$ is the domain of $\Xb_{v \backslash c} = \Xb_v \backslash \Xb_c$.
\vspace{-0.3em}

\noindent \textbf{Integration.}
In the non-negative real sum-product semiring $(\R_+, +, \times, 0, 1)$, summation of an SPF with
continuous variables 
corresponds to integration over $\X$. Accordingly, we generalize SPFs as follows.
Let $\mu_1, \dots, \mu_n$ be measures over $\X_1, \dots, \X_n$, respectively, where
each leaf function $\phi_l : \X_j \rightarrow \R_+$ is integrable with respect to $\mu_j$, which satisfies (C1).
Summation (integration) of an SPF $S(\Xb)$ then corresponds to computing 
$\int_{\X} S(\Xb) d\mu = \int_{\X_1} \cdots \int_{\X_n} S(\Xb) d\mu_1 \cdots d\mu_n$.
For (C2), 
$\bigoplus_{\X_{v \backslash c}} 1 = \int_{\X_{v \backslash c}} 1 ~d\mu_{v \backslash c}$
must be integrable for all sum nodes $v \in S$ and all children $c \in \text{Ch}(v)$, 
where $d\mu_{v \backslash c} = \prod_{ \{j : X_j \in \Xb_{v \backslash c} \} } d\mu_j$.
We thus assume that either $\mu_{v \backslash c}$ has finite support 
over $\X_{v \backslash c}$ or that $\Xb_{v \backslash c} = \varnothing$.
Corollary~\ref{cor:integration} follows immediately.

\begin{corollary}
\label{cor:integration}
Every decomposable SPF of real variables can be integrated in time linear in its size.
\end{corollary}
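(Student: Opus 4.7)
The plan is to adapt the proof of Theorem~\ref{thm:spt} essentially verbatim to the continuous setting, with $\bigoplus$ instantiated as Lebesgue integration $\int \cdot \, d\mu$ against the product measure $\mu = \mu_1 \otimes \cdots \otimes \mu_n$, and $\otimes$ instantiated as ordinary multiplication in $\R_+$. Conditions (C1) and (C2) are designed precisely to supply the two constant-time primitives that the discrete proof relied on: constant-time integration at a leaf, and constant-time evaluation of $\bigoplus_{\X_{v \backslash c}} 1$ at a sum node. Once these are in place, the recursive accounting (one operation per child, every node visited once) delivers the same linear-in-size time bound.

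Before the recursion, I would establish by structural induction on the DAG the invariant that every sub-SPF $S_v$ computes a non-negative measurable function on $\X_v$. The base case holds because each leaf function $\phi_l$ is assumed integrable with respect to $\mu_j$, hence measurable into $\R_+$; the inductive step is immediate since pointwise sums and products of non-negative measurable functions remain non-negative and measurable. This invariant is what licenses the subsequent use of Tonelli's theorem without any technical side conditions.

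At a sum node $v$, linearity of the integral combined with the fact that $S_i(\Xb_i)$ does not depend on the variables in $\Xb_{v \backslash i}$ yields
\[
\int_{\X_v} \sum_i S_i(\Xb_i) \, d\mu_v = \sum_i Z_i \cdot \int_{\X_{v \backslash i}} 1 \, d\mu_{v \backslash i},
\]
where each $Z_i$ is obtained from the recursion and the padding factor is constant-time by (C2). At a product node with disjoint-scope children, the product measure on $\X_v$ factors and Tonelli's theorem gives
\[
\int_{\X_v} \prod_i S_i(\Xb_i) \, d\mu_v = \prod_i \int_{\X_i} S_i(\Xb_i) \, d\mu_i = \prod_i Z_i.
\]

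The only place where the argument truly departs from the discrete one is this last identity: in Theorem~\ref{thm:spt} the corresponding interchange of $\bigoplus$ operators was free by associativity and commutativity, whereas here it requires a Fubini/Tonelli-type exchange of iterated integrals. The main obstacle is therefore isolating and justifying the right version of Tonelli's theorem at product nodes. Working in $(\R_+, +, \times, 0, 1)$ rather than over signed reals is what makes this clean, since Tonelli applies unconditionally to non-negative measurable integrands once the measurability invariant is secured. With that interchange justified, the bookkeeping from the sum-product theorem transfers unchanged and yields the stated linear-time bound.
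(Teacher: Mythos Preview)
Your proposal is correct and follows the same route as the paper, which simply states that the corollary ``follows immediately'' from Theorem~\ref{thm:spt} once conditions (C1) and (C2) are in place. Your treatment is in fact more careful than the paper's: you make explicit the measurability invariant and isolate Tonelli's theorem as the justification for the interchange at product nodes, points the paper leaves implicit.
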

\vspace{-0.4em}

Thus, decomposable SPFs define a class of functions for which exact integration is tractable.
%
%
\alg{SumSPF} defines a novel algorithm for (approximate) integration that
is based on recursive problem decomposition, and can be
exponentially more efficient than standard integration algorithms
such as trapezoidal or Monte Carlo methods~\citep{Press2007} because
it dynamically decomposes the problem at each recursion level
and caches intermediate computations. More detail 
is provided in Appendix~\ref{sec:continuous-sup}. 
\vspace{-0.2em}

In this semiring, \alg{LearnSPF} learns a
decomposable continuous SPF $S : \X \rightarrow \R_+$ 
on samples $\{(\xbi, \yii = F(\xbi))\}$ from an SPF $F : \X \rightarrow \R_+$,
where $S$ can be 
integrated efficiently over the domain $\X$. Thus, \alg{LearnSPF} provides a novel
method for learning and integrating complex functions, such as the partition
function of continuous probability distributions.
\vspace{-0.2em}


\noindent \textbf{Nonconvex optimization.}
Summing a continuous SPF in one of the 
min-sum, min-product, max-sum, or max-product semirings corresponds to
optimizing a (potentially nonconvex) continuous objective function. 
Our results hold for all of these, but we focus here on the 
real min-sum semiring $(\R_\infty, \min, +, \infty, 0)$, where summation of
a min-sum function (MSF) $F(\Xb)$ corresponds to computing $\min_{\X} F(\Xb)$.
A flat MSF is simply a sum of terms.
To satisfy (C1), we assume that $\min_{x_j \in \X_j} \phi_l(x_j)$ is 	
computable in constant time for all $\phi_l \in F$. 
(C2) is trivially satisfied for $\min$.
The corollary below follows immediately.

\begin{corollary}
The global minimum of a decomposable MSF can be found
in time linear in its size.
\label{cor:ncopt}
\end{corollary}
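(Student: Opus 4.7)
The plan is to derive Corollary~\ref{cor:ncopt} as a direct specialization of the sum-product theorem (Theorem~\ref{thm:spt}) to the min-sum semiring, after verifying the semiring axioms and the two continuous-domain side conditions (C1) and (C2) introduced in Section~\ref{sec:continuous}. This is the same recipe already used for Corollary~\ref{cor:integration}, just with $(\R_\infty, \min, +, \infty, 0)$ in place of the non-negative real sum-product semiring.

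First I would check that $(\R_\infty, \min, +, \infty, 0)$ satisfies the definition of a commutative semiring. Both $\min$ and $+$ (extended so that $a + \infty = \infty$) are associative and commutative, and the identities are correct: $\min(a, \infty) = a$ and $a + 0 = a$, with $\infty \neq 0$. Distributivity $a + \min(b,c) = \min(a+b, a+c)$ holds because $+$ is monotone, and $\infty$ absorbs $+$ since $a + \infty = \infty$ for all $a \in \R_\infty$. Hence the sum-product theorem applies in principle, once the continuous-domain conditions are checked.

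Next I would verify (C1) and (C2) for any decomposable MSF $F(\Xb)$. Condition (C1) is exactly the standing assumption in the statement of the result: each leaf function $\phi_l$ admits $\min_{x_j \in \X_j} \phi_l(x_j)$ in constant time. Condition (C2) asks that $\bigoplus_{\X_{v \backslash c}} 1 \neq \infty$, which in this semiring becomes $\min_{\X_{v \backslash c}} 0 = 0 \neq \infty$; this holds regardless of whether $\X_{v \backslash c}$ is finite, infinite, or empty, so (C2) is automatic. The argument is identical for max-sum, min-product on $\R_+$, and max-product on $\R_+$ once the corresponding identities are identified, which is why the opening sentence of the subsection claims all four semirings simultaneously.

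With the semiring and side conditions in hand, the conclusion follows immediately: Theorem~\ref{thm:spt} gives a linear-time procedure to compute $\bigoplus_\X F(\Xb) = \min_\X F(\Xb)$, i.e., the global minimum of $F$. There is essentially no technical obstacle here; the only point that warrants explicit mention is that ``global'' minimization over a potentially nonconvex landscape becomes tractable solely because decomposability lets the sum-product recursion commute $\min$ past $+$ at each product node, exactly as in the proof of Theorem~\ref{thm:spt}. If desired, the argument of the minimum can then be recovered by the standard single downward pass described for the min/max semirings earlier in Section~\ref{sec:learning}.
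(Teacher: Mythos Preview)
Your proposal is correct and matches the paper's approach exactly: the paper simply states that (C1) holds by assumption, that (C2) is trivially satisfied for $\min$, and that the corollary then follows immediately from the sum-product theorem. Your write-up is more explicit in checking the semiring axioms and spelling out why (C2) holds, but the underlying argument is identical.
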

\vspace{-0.2em}

\alg{SumSPF} provides an outline for a general nonconvex optimization algorithm
for sum-of-terms (or product-of-factors) functions. 
The recent RDIS algorithm for nonconvex optimization~\citep{Friesen2015}, 
which achieves exponential speedups compared to other algorithms, is an
instance of \alg{SumSPF} where values are chosen via multi-start gradient descent
and variables in \alg{Decompose} are chosen by graph partitioning. 
\citet{Friesen2015}, however, do not specify tractability conditions for the optimization; 
thus, Corollary~\ref{cor:ncopt} defines a novel class of functions
that can be efficiently globally optimized.
\vspace{-0.1em}

For nonconvex optimization, \alg{LearnSPF} solves a variant of structured 
prediction~\citep{Taskar2005}, in which the 
variables to predict are continuous instead of discrete (e.g., protein folding, 
structure from motion~\citep{Friesen2015}).
The training data is a set $\{(\xbi, \ybi)\}$,
where $\xbi$ is a structured object representing a nonconvex function and
$\ybi$ is a vector of values specifying the global minimum of that function.
\alg{LearnSPF} learns a function $S : \X \times \Y \rightarrow \R_{\infty}$
such that $\argmin_{\y \in \Y} S(\xbi, \y) \approx \ybi$, where the $\argmin$
can be computed efficiently because S is decomposable. 
More detail is provided in Section~\ref{sec:experiments}.


\vspace{-0.6em}
\section{Experiments}
\label{sec:experiments}
\vspace{-0.1em}

We evaluated \alg{LearnSPF} on the task of learning a nonconvex decomposable
min-sum function (MSF) from a training set of solutions of instances of a highly-multimodal
test function consisting of a sum of terms. By learning an MSF, instead of just a sum of terms, 
we learn the general mathematical form of the optimization problem in such a way 
that the resulting learned problem is tractable, whereas the original sum of terms is not.
The test function we learn from is a variant of the Rastrigin 
function~\citep{Torn1989}, 
a standard highly-multimodal test function for global optimization consisting
of a sum of multi-dimensional sinusoids in quadratic basins. 
The function, $F_{\Xb}(\Yb) = F(\Yb; \Xb)$, has parameters $\Xb$, 
which determine the dependencies
between the variables $\Yb$ and the location of the minima.
To test \alg{LearnSPF}, we sampled a dataset of function instances 
$T = \{ (\xbi, \ybi ) \}_{i=1}^m$ from a
distribution over $\X \times \Y$, where 
$\ybi= \argmin_{\y \in \Y} F_{\xbi}(\y)$. 

\vspace{-0.1em}

\alg{LearnSPF} partitioned variables $\Yb$ based on
the connected components of a graph containing a node for each $Y_i \in \Yb$ and an
edge between two nodes only if $Y_i$ and $Y_j$ were correlated,
as measured by Spearman rank correlation.
Instances were clustered by running k-means on the values $\ybi$.
For this preliminary test, \alg{LearnSPF} did not learn the leaf functions
of the learned min-sum function (MSF) $M(\Yb)$; 
instead, when evaluating or minimizing a leaf node
in $M$, we evaluated or minimized the test function with all variables 
not in the scope of the leaf node fixed to $0$
(none of the optima were positioned at $0$).
This corresponds to having perfectly learned leaf nodes
if the scopes of the leaf nodes accurately reflect
the decomposability of $F$, otherwise a large error is incurred.
We did this to study the effect of learning the decomposability structure
in isolation from the error due to learning leaf nodes. The function used for comparison
is also perfectly learned. Thresholds
$t$ and $v$ were set to $30$ and $2$, respectively.


The dataset was split into $300$ training samples and $50$ test samples,
where $\min_{\Y} F_{\xbi}(\Yb) = 0$ for all $i$ for comparison purposes.
After training, we computed $\y_M = \argmin_{\Y} M(\Yb)$ 
for each function in the test set by 
first minimizing each leaf function (with respect to only those variables in the scope of the leaf function) 
with multi-start L-BFGS~\citep{Liu1989}
and then performing an upward and a downward pass in $M$.
Figure~\ref{fig:testfunc} shows the result of 
minimizing the learned MSF $M$ and evaluating the test function at $\y_M$ (blue line) compared to
running multi-start L-BFGS directly on the test function and reporting
the minimum found (red line),
where both optimizations are run for the same fixed amount of time (one minute per test sample).
\alg{LearnSPF} accurately learned the decomposition structure
of the test function, allowing it to find much better minima when optimized,
since optimizing many small functions at the leaves requires exploring exponentially
fewer modes than optimizing the full function.
Additional experimental details are provided in Appendix~\ref{sec:exper-sup}. 
\looseness=-1

\begin{figure}[tb]
\includegraphics[width=\columnwidth]{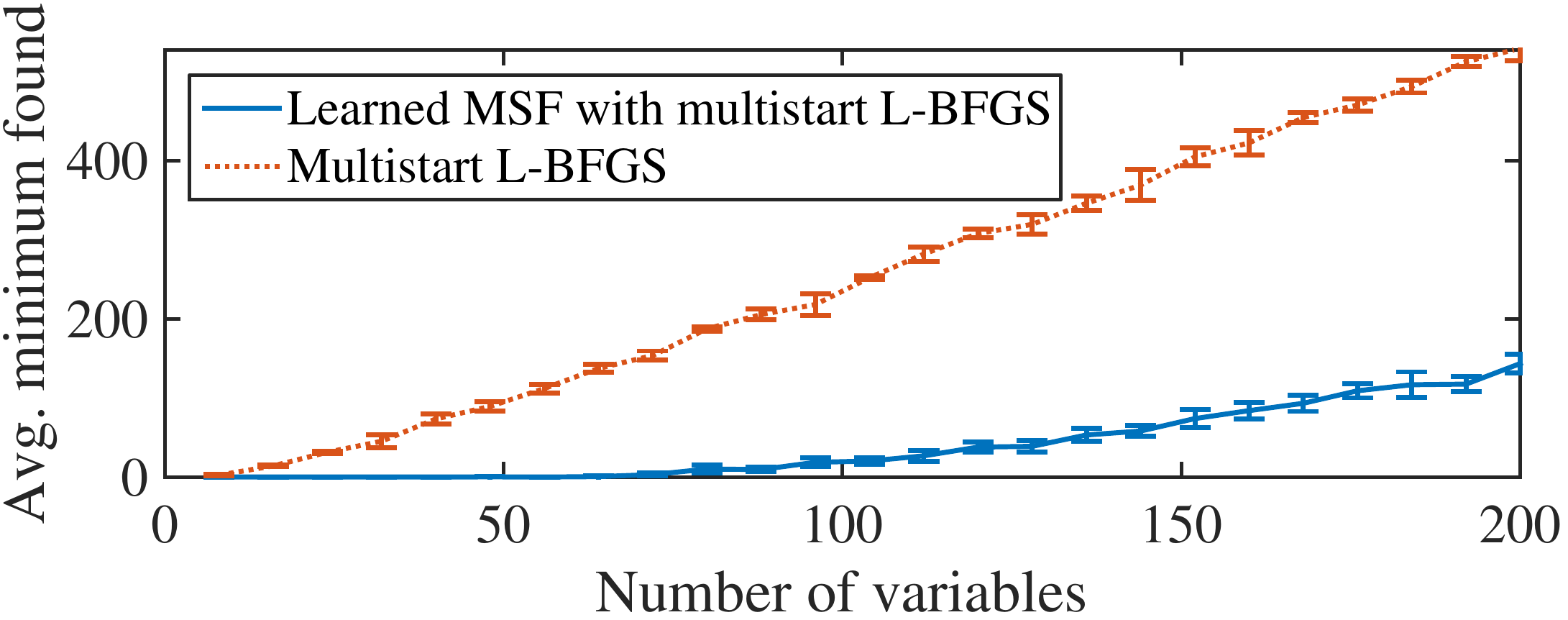}
\caption{\small{
The average minimum found over 20 samples of the test function 
versus the number of variables, with standard error bars. 
Each function was optimized for the same amount of time. 
\vspace{0.5em}
}}
\label{fig:testfunc}
\end{figure}

\vspace{-0.6em}
\section{Conclusion}
\label{sec:conclusion}

This paper developed a novel foundation for learning tractable representations
in any semiring based on the sum-product theorem,
a simple tractability condition for all inference problems that reduce
to summation on a semiring. With it, we developed a general inference algorithm 
and an algorithm for
learning tractable representations in any semiring. 
We demonstrated the power and generality of our approach by applying it
to learning a nonconvex function that can be optimized in polynomial time, 
a new type of structured prediction problem.
We showed empirically that our learned function greatly outperforms 
a continuous function learned without regard to the cost of optimizing it.
We also showed that the sum-product theorem specifies an exponentially
weaker condition for tractability than low treewidth and that its corollaries include
many previous results in
the literature, as well as a number of  novel results.
%
%
%

\vspace{-0.6em}
\section*{Acknowledgments}
This research was partly funded by
ONR grants N00014-13-1-0720 and N00014-12-1-0312, and AFRL contract
FA8750-13-2-0019. The views and conclusions contained in this
document are those of the authors and should not be interpreted as
necessarily representing the official policies, either expressed or
implied, of ONR, AFRL, or the United States Government.

\vspace{-0.6em}
\bibliographystyle{icml2016}
\bibliography{../../bibtex/library}

\newpage
\begin{appendices}

\setcounter{section}{0}
\renewcommand\thesection{\Alph{section}}

\section{Decomposable SPF summation complexity}
\label{sec:complexity-sup}

Let $S(\Xb)$ be a decomposable SPF with size $|S|$ on 
commutative semiring $(R, \oplus, \otimes, 0, 1)$, 
let $d = |\X_i|$ for all $X_i \in \Xb$ where $\Xb = (X_1, \dots, X_n)$,
and let the cost of $a \oplus b$ and $a \otimes b$ for any elements $a, b \in R$ be $c$.
Further, let $e$ denote the complexity of evaluating any unary leaf 
function $\phi_j(X_i)$ in $S$ and let 
${k = \max_{v \in S_{\text{sum}}, j \in \text{Ch}(v)} |\Xb_v \backslash \Xb_j| < n}$, 
where $S_{\text{sum}}, S_{\text{prod}}$, and $S_{\text{leaf}}$ are the
sum, product, and leaf nodes in $S$, respectively, and $\text{Ch}(v)$ are the children of $v$.
Then the complexity of computing $\bigoplus_{\x \in \X} S(\x)$ is 
${|S| \cdot c + 
|S_{\text{leaf}}| \cdot d(e + c) + 
|S_{\text{sum}}| \cdot (c + k d c)}$.

For certain simple SPFs that have very little internal structure and many input variables, the
worst case complexity of summing $S$ can be quadratic in $|S|$ and occurs in the rare and
restrictive case
where $k = O(n) = O(|S|)$, due to the
$\bigoplus_{\X_{v \backslash i}} 1$ term at each sum node (see proof of Theorem~\ref{thm:spt}). 
However, in any semiring with an 
idempotent sum (i.e., $a \oplus a = a$ for every $a \in R$) such as the min-sum or max-product semirings,
this term is always equal to $1$ and thus no computation is necessary. 
Alternatively, if the semiring supports multiplication and division as in the sum-product semiring
then this complexity can be reduced by first computing the product over all variables and then dividing 
out as needed. If the semiring has neither of these properties, these identity summations can still be
computed with a single preprocessing pass through the SPF since they are constants and independent of the 
input variables. For all semirings we've studied, this quadratic cost does not occur, 
but we include it for completeness.


\section{Logical inference (continued)}
\label{sec:logical-sup}

\textbf{Model counting.}
Model counting (\#SAT) is the problem of computing the number
of satisfying assignments of a Boolean formula.
The model count of an NNF $F$ can be obtained by \emph{translating} it from
the Boolean semiring to the counting sum-product semiring $\mc{P} = (\N,+,\times,0,1)$
($\R_+$ is used instead for weighted \#SAT), and then summing it.

\setcounter{define}{7}
\begin{define}
\emph{Translating} an SPF from semiring $(R,\oplus,\otimes,0,1)$ to semiring 
$(R',\boxplus,\boxtimes,0',1')$ with $R \subseteq R'$, involves
replacing each $\oplus$ node with a $\boxplus$ node, 
each $\otimes$ node with a $\boxtimes$ node,
and each leaf function that returns $0$ or $1$ with one that returns $0'$ or $1'$, respectively.
\end{define}

\noindent
However, simply summing the translated function $F'$ may
compute an incorrect model count because
the same satisfying assignment may be counted multiple times; this occurs when
the idempotence (a semiring $R$ is idempotent if $a \oplus a = a$ for $a \in R$) of the two semirings differs,
i.e., either semiring $R$ is idempotent and $R'$ is not, or vice versa.
If exactly one of the two semirings is idempotent, 
$F$ must be \emph{deterministic} to ensure that summing $F'$ gives the correct model count.

\begin{define}
An OR node is \emph{deterministic} iff the supports of its children are disjoint.
An NNF is deterministic iff all of its OR nodes are deterministic.
\end{define}

The support of a function $G(\Xb)$ is the set of points $\mc{S} \subseteq \X$ 
such that $G(\x) \neq 0$ for all $\x \in \mc{S}$.
If $F$ is deterministic and decomposable, then it follows from the sum-product theorem
that its model count can be computed efficiently. 
 

\setcounter{citedcorollary}{7}
\setcounter{corollary}{7}
\addtocounter{corollary}{1}
\begin{citedcorollary}[\citep{Darwiche2000a}]
The model count of a deterministic, decomposable NNF can be computed in time linear in its size.
\label{cor:sharpsat}
\end{citedcorollary}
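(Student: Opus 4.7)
The plan is to reduce the corollary to the sum-product theorem by translating the deterministic, decomposable NNF $F$ from the Boolean semiring $\mc{B} = (\B, \vee, \wedge, 0, 1)$ to the counting sum-product semiring $\mc{P} = (\N, +, \times, 0, 1)$ and then showing that the translated SPF $F'$ computes the model count when summed over $\X$. Since translation only relabels internal nodes and leaves (literals map to the same $\{0,1\}$-valued functions in $\mc{P}$), the scope of every node is preserved; in particular, $F'$ is decomposable because $F$ is. By Theorem~\ref{thm:spt}, $\bigoplus_{\x \in \X} F'(\x) = \sum_{\x \in \X} F'(\x)$ can be computed in time linear in the size of $F'$, which equals the size of $F$. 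So the whole content of the proof is the \emph{correctness} of the translation: that this sum equals $|\{\x \in \X : F(\x) = 1\}|$.

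I would prove this by induction on the structure of $F$, showing for every sub-SPF $G$ with scope $\Xb_G$ that $\sum_{\x_G \in \X_G} G'(\x_G) = |\{\x_G \in \X_G : G(\x_G) = 1\}|$. The base case (a literal on $X_j$) is immediate: the literal returns $1$ on exactly one of the two values of $X_j$ in both semirings. For a product (AND) node $G = \bigwedge_i G_i$, decomposability gives disjoint scopes $\Xb_i$, so $\x_G$ satisfies $G$ iff each $\x_{G,i}$ satisfies $G_i$; thus the satisfying sets factor, and $\sum_{\x_G} \prod_i G_i'(\x_{G,i}) = \prod_i \sum_{\x_{G,i}} G_i'(\x_{G,i})$ matches by the induction hypothesis. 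For a sum (OR) node $G = \bigvee_i G_i$, determinism ensures that the supports of the $G_i$ are pairwise disjoint, so the sets of satisfying assignments (extended to $\X_G$) are pairwise disjoint and $|\mathrm{sat}(G)|$ is the sum of $|\mathrm{sat}(G_i)|$ over $\X_G$; meanwhile the proof of Theorem~\ref{thm:spt} already expresses summation at a sum node as $\sum_i Z_i \cdot \big(\sum_{\X_{G \setminus i}} 1\big)$, and $\sum_{\X_{G \setminus i}} 1 = |\X_{G \setminus i}|$ in $\mc{P}$, which is exactly the factor that lifts $|\mathrm{sat}(G_i)|$ from its scope to $\X_G$. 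Combining the two sides gives the equality.

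The main obstacle is the OR case: without determinism, an assignment satisfying several children would be counted once in $\mathrm{sat}(G)$ but multiple times in the translated sum, because $+$ is not idempotent whereas $\vee$ is. The determinism hypothesis is exactly the structural condition that eliminates this overcounting, and it is the only place in the induction where anything beyond the sum-product theorem's mechanics is used. A secondary bookkeeping point is to handle scopes correctly at sum nodes where children have strictly smaller scopes than their parent; this is resolved cleanly by the $\bigoplus_{\X_{v\setminus i}} 1$ factor from Theorem~\ref{thm:spt}, which in the counting semiring yields the cardinality $|\X_{v \setminus i}|$ needed to extend each child's satisfying set to the parent's scope.
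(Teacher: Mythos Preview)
Your proposal is correct and follows essentially the same route as the paper: translate $F$ to the counting semiring, observe that size and decomposability are preserved so Theorem~\ref{thm:spt} gives linear time, and prove correctness by structural induction using decomposability at AND nodes and determinism at OR nodes. Your treatment is in fact slightly more careful than the paper's at the OR step, where you explicitly track the $\bigoplus_{\X_{v\setminus i}} 1 = |\X_{v\setminus i}|$ factor needed to lift each child's count to the parent's scope; the paper's proof glosses over this bookkeeping.
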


Most algorithms for \#SAT (e.g., Relsat~\citep{BayardoJr2000}), 
Cachet~\citep{Sang2004}, \#DPLL~\citep{Bacchus2009}) 
are also instances of \alg{SumSPF}, since they extend DPLL by,
at each level of recursion, decomposing the CNF 
into independent components (i.e., no variable appears in multiple components), 
solving these separately, and caching the model count of each component.
Component decomposition corresponds to a decomposable
product node in \alg{SumSPF} and component caching corresponds to connecting
a sub-SPF to multiple parents. Notice that the sum nodes created by \alg{Decompose} 
are deterministic.

\textbf{MAX-SAT.}
MAX-SAT is the problem of computing the maximum number of satisfiable clauses of a CNF, 
over all assignments. It can be generalized to NNFs as follows.

\begin{define}
Let $F(\Xb)$ be an NNF and $\x \in \X$ an assignment. 
The \emph{SAT number} (SN) of a literal $\phi(X_j) \in F$ is $1$ 
if $\phi(\x_j)$ is true and 0 otherwise. 
The SN of an AND node is the sum of the SNs
of its children. The SN of an OR node is the max 
of the SNs of its children. 
\end{define}

MAX-SAT of an NNF $F(\Xb)$ is the problem of computing the 
maximum SAT number of the root of $F$ over all assignments $\x \in \X$. 
If $F$ is a CNF, then this reduces to standard MAX-SAT.
MAX-SAT of $F$ can be solved by translating $F$ to the max-sum semiring
$\mc{M} = (\N_{-\infty}, \max, +, -\infty, 0)$ (where $\R_{+,-\infty}$ is used for weighted MAX-SAT), 
and then summing it.
Clearly, $F'$ is an SPF on $\mc{M}$, i.e., a max-sum network.
The corollary below follows immediately from the sum-product theorem. 

\addtocounter{corollary}{1}
\begin{citedcorollary}[\citep{Darwiche2001}]
MAX-SAT of a decomposable NNF can be computed in time linear in its size.
\end{citedcorollary}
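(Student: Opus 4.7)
The plan is to reduce the claim directly to Theorem~\ref{thm:spt} (the sum-product theorem) applied to the translated formula $F'$ on the max-sum semiring $\mc{M} = (\N_{-\infty}, \max, +, -\infty, 0)$, and to verify that summing $F'$ on $\mc{M}$ recovers precisely the MAX-SAT value of $F$. First I would check that $\mc{M}$ is indeed a commutative semiring: associativity, commutativity, and identity of $\max$ (with $-\infty$) and $+$ (with $0$) are standard, $+$ distributes over $\max$, and $-\infty$ is absorbing for $+$ (adopting the usual convention $-\infty + a = -\infty$). This is a one-line verification that places us in the regime where Theorem~\ref{thm:spt} applies.

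Second, I would observe that translation (Definition~8) only relabels internal nodes and rewrites the image of leaf literals; it does not touch the scope of any node. Therefore a decomposable NNF $F$ becomes, after translation, a decomposable SPF $F'$ on $\mc{M}$, with each leaf $\phi(X_j)$ now returning $1 \in \N$ when the literal is satisfied by the input and $0$ otherwise. This step is essentially free but is the bridge from ``decomposable NNF'' (the hypothesis) to ``decomposable SPF on $\mc{M}$'' (the hypothesis of Theorem~\ref{thm:spt}).

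Third, and this is the main semantic step, I would prove by structural induction on $F$ that for every assignment $\x \in \X$, the value $F'(\x)$ evaluated in $\mc{M}$ equals the SAT number $\mathrm{SN}(F,\x)$ of Definition~10. The base case matches by construction of the translated leaves. For an AND node, the inductive hypothesis together with the translation $\wedge \mapsto +$ gives $F'(\x) = \sum_i F'_i(\x_i) = \sum_i \mathrm{SN}(F_i,\x_i) = \mathrm{SN}(F,\x)$; for an OR node, $\wedge \mapsto \max$ gives the analogous identity with $\max$ in place of sum. Consequently $\bigoplus_{\x \in \X} F'(\x) = \max_{\x \in \X} \mathrm{SN}(F,\x)$, which is exactly MAX-SAT of $F$ by Definition~10.

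Finally, since $F'$ is a decomposable SPF on the commutative semiring $\mc{M}$, Theorem~\ref{thm:spt} sums it in time linear in $|F'| = |F|$, and the translation itself is a single pass in $O(|F|)$ time, giving the claimed bound. The only part that requires genuine care is the third step: one has to be comfortable with the convention $-\infty + a = -\infty$ when literals evaluate to the semiring-$0$, and confirm that this matches the intended semantics of SAT numbers at AND nodes (an unsatisfied literal makes an AND term contribute $-\infty$, so that ``max over assignments'' correctly favors assignments that satisfy as many clauses as possible in the weighted or general NNF setting). Once that convention is fixed, every other step is routine.
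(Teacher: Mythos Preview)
Your approach matches the paper's: it offers no separate proof, stating only that the corollary follows immediately from Theorem~\ref{thm:spt} once $F$ is translated to the max-sum semiring $\mc{M}$, and your structural induction showing $F'(\x)=\mathrm{SN}(F,\x)$ is the natural way to make this rigorous.

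That said, your final paragraph is internally inconsistent and should be dropped. In step two you (correctly) have translated leaves return $1 \in \N$ when satisfied and $0 \in \N$ otherwise; under this encoding the semiring zero $-\infty$ never appears anywhere in $F'$, so the concern about the convention $-\infty + a = -\infty$ is vacuous. Worse, the claim that ``an unsatisfied literal makes an AND term contribute $-\infty$'' contradicts both your own encoding and Definition~10: the SAT number of an AND node is the \emph{ordinary sum} of its children's SAT numbers, so an AND of one unsatisfied and two satisfied literals has SAT number $0+1+1=2$, not $-\infty$. Had you instead applied Definition~8 literally (Boolean $1 \mapsto 0$, Boolean $0 \mapsto -\infty$), your step-three induction would fail outright, since then $F'(\x) \ne \mathrm{SN}(F,\x)$ in general. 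The resolution is simply that the $1/0$ leaf encoding you already stated in step two is the one appropriate for MAX-SAT---it still yields a valid decomposable SPF on $\mc{M}$ even though it is not literally Definition~8's translation---and once you commit to it the proof is clean with no caveats needed. (Minor typo in step three: the OR case should read $\vee \mapsto \max$, not $\wedge \mapsto \max$.)
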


MAX-SAT of an arbitrary NNF (or CNF) can be computed by first translating
it to $\mc{M}$ and then calling \alg{SumSPF}, which can be
extended to perform
branch and bound (BnB)~\citep{Lawler1966} when traversing the SPF.
This allows \alg{SumSPF} to prune sub-SPFs that are not relevant to the final solution, 
which can greatly reduce the search space.
With this addition, DPLL-based BnB solvers for MAX-SAT 
(e.g.,~\citet{Heras2008} and references therein) are instances of \alg{SumSPF}.
Most relevant, however, is the MPE-SAT algorithm of~\citet{Sang2007}, since both
it and \alg{SumSPF} use decomposition and caching to improve their efficiency.

\section{Probabilistic inference (continued)}
\label{sec:prob-sup}

\textbf{Marginal inference (continued).}
Tree-based methods include junction-tree clustering~\citep{Lauritzen1988}
and variable elimination~\citep{Dechter1999},
which correspond (explicitly and implicitly, respectively) to
constructing a junction tree and then summing its corresponding
tree-like SPN. Conditioning algorithms such as
recursive conditioning~\citep{Darwiche2001b}, 
value elimination~\citep{Bacchus2002}, AND/OR search~\citep{Dechter2007},
and \#DPLL~\citep{Bacchus2009} 
traverse the space of partial assignments by recursively conditioning on variables and their values.
These algorithms vary in the flexibility of their variable ordering, decomposition,
and caching (see~\citet{Bacchus2009} for a comparison), but are all
instances of \alg{SumSPF}, which can use a fully-dynamic 
variable ordering, as value elimination can and \#DPLL does, or a fixed ordering, 
as in variants of recursive conditioning and AND/OR search. 
Decomposition and caching correspond to decomposable product nodes and 
connecting sub-SPNs to multiple parents, respectively, in \alg{SumSPF}.
Thirdly, inference in graphical models can be performed by compilation to
an arithmetic circuit (AC)~\citep{Darwiche2003}.
In discrete domains, \citet{Rooshenas2014} showed that SPNs and 
ACs are equivalent, but that SPNs are always smaller or equal in size. 
In continuous domains, however, it is unlikely that even this relationship exists,
because a AC would require an infinite number of indicator functions.
Furthermore, existing compilation methods require first encoding the
graphical model in very restrictive languages (such as CNF or SDDs),
which can make them exponentially slower than \alg{SumSPF}.
Finally, no tractability properties have been established
for ACs so there is no guarantee before compiling that inference will be tractable, 
nor have they been generalized to other semirings.

\textbf{MPE.}
Beyond computing the probability of evidence, 
another key probabilistic inference problem is finding the most probable or
MPE state of the non-evidence variables of $P(\Xb)$ given the evidence, 
$\argmax_{\X_{\ol{E}}} P(\mb{e}, \Xb_{\ol{E}})$
for evidence $\mb{e} \in \X_E$ where $\Xb_{\ol{E}} = \Xb \backslash \Xb_E$.
The MPE value (maximum probability
of any state) of an SPN $S$ can be computed by translating $S$ to the non-negative 
max-product semiring $(\R_+, \max, \times, 0, 1)$ and maximizing the
resulting SPF $S'$.  The MPE state can then be recovered by a downward pass
in $S'$, recursively selecting the (or a) highest-valued child of each max
node and all children of each product node~\citep{Poon2011}.  As when
translating an NNF for model counting, an SPN must be \emph{selective}~\citep{Peharz2014}
(the SPN equivalent of deterministic) 
for summation in the max-product semiring to give the correct MPE.\looseness=-1

\begin{corollary}
The MPE state of a selective, decomposable SPN can be found in time linear in its size.
\label{cor:mpe}
\end{corollary}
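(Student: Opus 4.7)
The plan is to reduce the MPE problem to summation in the max-product semiring via translation, apply the sum-product theorem to get the MPE value in linear time, and then recover the actual MPE state with an additional linear-time downward pass. The proof therefore breaks naturally into three steps: (i) reduce the MPE value computation to summing an SPF in the max-product semiring, (ii) use selectivity to justify that this sum indeed equals the MPE value, and (iii) describe and verify the downward pass that extracts the argmax.

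First, I would translate $S$ from the sum-product semiring $(\R_+, +, \times, 0, 1)$ to the max-product semiring $(\R_+, \max, \times, 0, 1)$, obtaining an SPF $S'$ on the max-product semiring. Because translation only replaces $+$ nodes with $\max$ nodes without altering either the scopes of any node or the DAG structure, $S'$ is still decomposable. I would also set each leaf function $\phi_l(X_j)$ for $X_j \in \Xb_E$ to the constant $\phi_l(\mb{e}_j)$, just as in the evidence computation of Corollary~\ref{cor:probinf}. By the sum-product theorem, $\bigoplus_{\X_{\ol{E}}} S'(\mb{e}, \Xb_{\ol{E}}) = \max_{\X_{\ol{E}}} S'(\mb{e}, \Xb_{\ol{E}})$ can then be computed in a single upward pass in time linear in $|S|$.

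Second, I would argue that selectivity guarantees $\max_{\X_{\ol{E}}} S'(\mb{e}, \Xb_{\ol{E}}) = \max_{\X_{\ol{E}}} S(\mb{e}, \Xb_{\ol{E}})$, i.e., that this sum is the true MPE value. This is a recursive argument on the SPN: at every leaf and product node the two semirings agree (products are identical), so the only issue is at sum nodes. For a selective SPN, the children of any sum node have disjoint supports, so for any assignment $\x$ exactly one child contributes a nonzero value, which means $\bigoplus_i S_i(\x) = \sum_i S_i(\x) = \max_i S_i(\x)$ pointwise. Hence $S'(\x) = S(\x)$ for all $\x$, and the translated max then coincides with the true MPE value.

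Third, to recover the MPE state I would perform a downward pass through $S'$ starting at the root: at each max node select a child attaining the maximum value computed on the upward pass, at each product node recurse on all children, and at each (surviving) indicator/leaf node read off the implied value of the associated variable. Decomposability ensures that the scope restrictions chosen in different subtrees of a product node never conflict, so the collected choices define a well-formed assignment to $\Xb_{\ol{E}}$; selectivity ensures that the chosen max-child's value equals the total value propagated up through that sum node, so the final assignment achieves $\max_{\X_{\ol{E}}} S(\mb{e}, \Xb_{\ol{E}})$. Since both passes visit each node and edge a constant number of times, the total work is linear in $|S|$.

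The main obstacle I anticipate is the consistency argument in step three, namely, justifying that the locally greedy choices made at sum nodes in the downward pass aggregate into a globally consistent assignment whose joint value matches the number computed upward. Selectivity is exactly what prevents the failure mode in which two sum-node children contribute to overlapping regions of $\X_{\ol{E}}$ but under different variable settings of their shared-scope descendants; with disjoint supports, the chosen subtree fully determines the relevant part of the assignment. Decomposability handles the orthogonal concern at product nodes. Once this invariant is stated carefully and pushed through the recursion, the rest of the argument reduces to Theorem~\ref{thm:spt} applied to the translated SPF.
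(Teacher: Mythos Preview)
Your proposal is correct and follows essentially the same route as the paper: translate to the max-product semiring, invoke the sum-product theorem for the linear-time upward pass, use selectivity to certify that the computed value equals the true MPE value, and finish with a linear-time downward pass. The only minor difference is that you argue the stronger pointwise identity $S'(\x)=S(\x)$ (since at each selective sum node at most one child is nonzero, so $\sum_i S_i(\x)=\max_i S_i(\x)$), whereas the paper proves the weaker but sufficient equality $\max_{\X} S_v'(\Xb)=\max_{\X} S_v(\Xb)$ by induction on nodes; both arguments hinge on selectivity in the same way.
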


A sum node in an SPN can be viewed as the result of summing out an implicit
hidden variable $Y_v$, whose values $\Y_v = \{y_c\}_{c \in \text{Ch}(v)}$
correspond to $\text{Ch}(v)$, the children of $v$~\citep{Poon2011}. It is
often of interest to find the MPE state of both the hidden and observed
variables. This can be done in linear time 
and requires only that the SPN be decomposable, 
because making each $Y_v$ explicit
by multiplying each child $c$ of $v$ by the indicator $[Y_v = y_c]$ 
makes the resulting SPN $S(\Xb, \Yb)$ selective.

\section{Integration and optimization}
\label{sec:continuous-sup}

\textbf{Integration (continued).}
For non-decomposable SPFs, \alg{Decompose} must be altered to select
only a finite number of values and then use the trapezoidal rule for approximate integration.
Values can be chosen using grid search and if $S$ is Lipschitz 
continuous the grid spacing can be set 
such that the error incurred by the approximation is bounded
by a pre-specified amount. This can significantly reduce the number
of values explored in \alg{SumSPF} if combined with approximate
decomposability (Section~\ref{sec:learning}), since \alg{SumSPF} 
can treat
some non-decomposable product nodes as decomposable, 
avoiding the expensive call to \alg{Decompose} 
while incurring only a bounded integration error.


\section{Relational inference}
\label{sec:relational-sup}

Let $\X$ be a finite set of constants and let $R^k = \X^k$ be the complete 
relation\footnote{A relation is a set of tuples; see~\citet{Abiteboul1995} for details on relational databases.}
of arity $k$ on $\X$, i.e., the set of all tuples in $\X^k$, 
where $\X^k$ is the Cartesian product of $\X$ with itself $k-1$ times.
The universe of relations with arity up to $m$ is 
$\mb{U}_m = \JI \cup \bigcup_{i = 1}^m 2^{R^i}$, 
where $2^{R^k}$ is the power set of $R^k$ and $\JI$ is the (identity) relation containing the empty tuple. 
Since union distributes over join, and both are associative and commutative,
$\mc{R} = (\mb{U}_m, \cup, \bowtie, \varnothing, \JI)$ 
is a semiring over relations, where $\bowtie$ is natural join and $\varnothing$ is the empty set
(recall that $R = R \bowtie \JI$ for any relation $R$).
Given an extensional database $\mb{R} = \{ R_i \}$ 
containing relations $R_i$ of arity up to $m$,
an SPF on $\mc{R}$, referred to as a union-join network (UJN), 
is a query on $\mb{R}$.
In a UJN $Q$, 
each $R_i \in \mb{R}$ is composed of a union of joins of unary
tuples, such that $R_i = \bigcup_{\tuple{c_1, \dots, c_r} \in R_i} \bigbowtie_{j = 1}^r c_j$,
where the leaves of $Q$ are the unary tuples $c_j$.
The $R_i$ are then combined with unions and joins to form the full UJN (query).
A UJN $Q(\Xb)$ over query variables $\Xb = (X_1, \dots, X_n)$
defines an intensional output relation $Q_{\text{ans}} = \bigcup_{\X^n} Q(\Xb)$.
Clearly, computing $Q_{\text{ans}}$ corresponds to summation in $\mc{R}$. 
Let $n_J^Q$ denote the maximum number of variables involved in a particular join over all joins in $Q$. 
The corollary below follows immediately, since
a decomposable join is a Cartesian product.

\begin{corollary}
$Q_{\text{ans}}$ of a decomposable UJN $Q$ can be computed
in time linear in the size of $Q$ if $n_J^Q$ is bounded.
\end{corollary}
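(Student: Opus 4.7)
The plan is to apply the sum-product theorem (Theorem~\ref{thm:spt}) to the relational semiring $\mathcal{R}$ and carefully account for the cost of each individual $\cup$ and $\bowtie$ operation, which, unlike the arithmetic semirings treated earlier in the paper, is not automatically constant-time.

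First, I would observe that a UJN $Q$ is, by construction, an SPF on $\mathcal{R}$: its unary-tuple leaves play the role of the $\phi_l$, its union nodes are sum nodes, and its join nodes are product nodes. A decomposable UJN is then exactly a decomposable SPF, since every join is between sub-SPFs with disjoint scopes, which over $\mathcal{R}$ means the join reduces to a Cartesian product. Theorem~\ref{thm:spt} immediately implies that $Q_{\text{ans}} = \bigcup_{\mathcal{X}^n} Q(\mathbf{X})$ can be computed using $O(|Q|)$ semiring operations.

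Next, I would bound the cost of each individual semiring operation. Because $Q$ is decomposable, every join is a Cartesian product, so the arity of the output relation at a join node equals the sum of the arities of its two inputs; by the definition of $n_J^Q$, this sum is at most $n_J^Q$. Inductively, every intermediate relation produced during summation has arity at most $n_J^Q$, and hence at most $|\mathcal{X}|^{n_J^Q}$ tuples. A single $\cup$ or $\bowtie$ between two such relations can then be executed in time polynomial in $|\mathcal{X}|^{n_J^Q}$, which is constant when $n_J^Q$ is bounded (with $\mathcal{X}$ treated as fixed). Combining this with the $O(|Q|)$ operation count from Theorem~\ref{thm:spt} yields the claimed linear-in-$|Q|$ runtime.

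The main obstacle is precisely this second step. Theorem~\ref{thm:spt} is stated in terms of a linear number of semiring operations, and the derivation in Appendix~\ref{sec:complexity-sup} tacitly treats $a \oplus b$ and $a \otimes b$ as unit-cost, which is genuinely not the case for relational union and natural join on arbitrary relations. Decomposability is what forces each join to be a Cartesian product (so that output arities merely add rather than potentially explode through shared columns), and the $n_J^Q$ bound is what keeps those Cartesian products small enough that individual operations remain cheap; without both ingredients, the linear-in-$|Q|$ guarantee fails. The remainder of the argument is then a routine specialization of the generic complexity accounting of Appendix~\ref{sec:complexity-sup} to this bounded per-operation cost.
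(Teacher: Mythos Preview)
Your proposal is correct and takes essentially the same approach as the paper: the paper simply states that the corollary ``follows immediately, since a decomposable join is a Cartesian product,'' and does not give a separate proof in the appendix. Your write-up makes explicit the two ingredients the paper leaves implicit---that Theorem~\ref{thm:spt} gives an $O(|Q|)$ bound on the number of semiring operations, and that the bound on $n_J^Q$ is precisely what keeps each individual $\cup$ and $\bowtie$ (now a Cartesian product) constant-cost---so your account is more detailed than the paper's but not different in substance.
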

\noindent Note that $n_J^Q$ can be smaller than the treewidth of $Q$,
since 
$Q$ 
composes the final output relation from many small 
relations (starting with unary tuples) via a relational form of determinism.
Since the size of a UJN depends both on the input relations and the query,
this is a statement about the combined complexity of queries defined by UJNs.

Regarding expressivity, selection in a UJN can be implemented as a join with the relation 
$P_\sigma \in \mb{U}_m$, 
which contains all tuples that satisfy the selection predicate $\sigma$.
Projection is not immediately supported by $\mc{R}$, but since union distributes over projection,
it is straightforward to extend the results of Theorem~\ref{thm:spt} to allow UJNs to contain projection nodes. 
UJNs with projection correspond to non-recursive Datalog queries 
(i.e., unions of conjunctive queries), 
for which decomposable UJNs are a tractable sub-class.
Thus, \alg{SumSPF} defines a recursive algorithm for evaluating non-recursive Datalog queries 
and the Generic-Join algorithm~\citep{Ngo2014} --
a recent join algorithm that achieves worst-case optimal performance
by 
recursing on individual tuples 
-- is an instance
of~\alg{Decompose}.


Another consequence of the sum-product theorem is a much simpler proof
of the tractability of tractable Markov logic~\citep{Domingos2012}.

\section{Relational probabilistic models}
\label{sec:tml-sup}

A tractable probabilistic knowledge base (TPKB)~\citep{Niepert2015,Webb2013,Domingos2012} 
is a set of class and object declarations such 
that the classes form a forest and the objects form a tree of subparts
when given the leaf class of each object.
A class declaration for a class $\mtt{C}$ specifies the subparts $\mtt{Parts(C) = \{P_i\} }$, 
(weighted) subclasses $\mtt{Subs(C) = \{S_i\}}$, attributes $\mtt{Atts(C) = \{A_i\}}$, 
and (weighted) relations $\mtt{Rels(C) = \{R_i\}}$.
The subparts of $\mtt{C}$ are parts that every object of class $\mtt{C}$ must have
and are specified by a name $\mtt{P_i}$, a class $\mtt{C_i}$, and a number $n_i$ of unique copies. 
A class $\mtt{C}$ with subclasses $\mtt{S_1, \dots, S_j}$ must belong to exactly
one of these subclasses, where the weights $w_i$ specify the distribution over subclasses.
Every attribute has a domain $\mb{D_i}$ and a weight function $\mb{u_i} : \mb{D_i}\rightarrow\R$.
Each relation $\mtt{R_i(\dots)}$ has the form $\mtt{R_i(P_a, \dots, P_z)}$ where each of $\mtt{P_a, \dots, P_z}$ is a
part of $\mtt{C}$. Relations specify what relationships may hold among the subparts. 
A weight $v_i$ on $\mtt{R_i}$ defines the probability that the relation is true. A relation can also
apply to the object as a whole, instead of to its parts.
Object declarations introduce evidence by specifying an object's subclass memberships, 
attribute values, and relations as well as specifying the names and path of the object from the top
object in the part decomposition.

A TPKB $\mc{K}$ is a DAG of objects and their properties (classes, attributes, and relations), and a possible world
$\mb{W}$ is a subtree of the DAG with values for the attributes and relations.
The literals are the class membership, attribute, and relation atoms and their negations and thus
specify the subclasses of each object, the truth value of each relation, and the value of each attribute.
%
A single (root) top object $\mtt{(O_0,C_0)}$ has all other objects as descendants. 
No other objects are of top class $\mtt{C_0}$.
The unnormalized distribution $\phi$ over possible subworlds $\mb{W}$ is defined recursively as 
$\phi(\mtt{O,C,}\mb{W}) = 0$ if $\neg \mtt{Is(O,C)} \in \mb{W}$ or if a relation $\mtt{R}$ of $\mtt{C}$ is hard and 
$\neg \mtt{R(O,\dots)} \in \mb{W}$, and otherwise as
%
\begin{align}
\phi(\mtt{O},\mtt{C},\mb{W}) =&  
       \left( \sum_{\mtt{S_i \in Subs(C)}} e^{w_i} \phi(\mtt{O, S_i,} \mb{W}) \right)  \times \nonumber \\
    & \left( \prod_{\mtt{P_i \in Parts(C)}} \phi(\mtt{O.P_i, C_i,} \mb{W}) \right)  \times \nonumber \\
    & \left( \prod_{\mtt{A_i \in Atts(C)}} \alpha(\mtt{O, A_i,} \mb{W}) \right)  \times \nonumber \\
    & \left( \prod_{\mtt{R_i \in Rels(C)}} \rho(\mtt{O, R_i,} \mb{W}) \right),  \label{eqn:tml}
\end{align}
where 
$\alpha(\mtt{O,A_i,}\mb{W}) = e^{\mb{u_i}(D)}$ if $\mtt{A_i(O,D)} \in \mb{W}$ and
$\rho(\mtt{O,R_i,}\mb{W}) = e^{v_i}[\mtt{R_i}(\dots)] + [\neg \mtt{R_i}(\dots)]$.
Note that $\mtt{Parts(C)}$ contains all subparts of $\mtt{C}$, including all duplicated parts.
%
The probability of a possible world $\mb{W}$ is $\frac{1}{Z_{\mc{K}}} \phi(\mtt{O_0, C_0}, \mb{W})$
where the sub-partition function for $(\mtt{O,C})$ is 
$Z_{\mc{K}_{\mtt{O,C}}} = \sum_{\mb{W} \in \mc{W}} \phi(\mtt{O, C,} \mb{W})$ and 
$Z_\mc{K} = Z_{\mc{K}_{\mtt{O_0,C_0}}}$.

By construction, $\phi(\mtt{O_0, C_0,} \mb{W})$ defines an SPN over the literals.
With the sum-product theorem in hand, it is possible to greatly simplify the two-page proof of tractability given
in~\citet{Niepert2015}, as we show here. 
To prove that computing $Z_{\mc{K}}$ is tractable it suffices to
show that (\ref{eqn:tml}) is decomposable or can be decomposed efficiently.
We first note that
each of the four factors in (\ref{eqn:tml}) is decomposable, since
the first is a sum,
the second is a product over the subparts of $O$ and 
therefore its subfunctions have disjoint scopes,
and the third and fourth are products over the attributes and relations, respectively, and are decomposable
because none of the $\alpha$ or $\rho$ share variables. It only remains to show that the factors
can be decomposed
with respect to each other without increasing the size of the SPN.
Let $n_O, n_C, n_r$ denote the number of object declarations, class declarations,
and relation rules, respectively.
The SPN corresponding to $\phi(\mtt{O_0, C_0,} \mb{W})$
has size $O(n_O(n_C + n_r))$, since for each object $(\mtt{O,C})$ 
there are a constant number of edges for each of its
relations and subclasses.
Similarly, $\mc{K}$ has size $|\mc{K}| = O(n_O(n_C + n_r)$.
We can thus prove the following result.

\vspace{0.5em}
\begin{corollary}
The partition function of a TPKB can be computed in time linear in its size.
\end{corollary}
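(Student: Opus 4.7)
The plan is to reduce the claim to Theorem~\ref{thm:spt} (the sum-product theorem) by exhibiting $\phi(\mtt{O_0, C_0}, \mb{W})$ as a decomposable SPN on the non-negative real sum-product semiring whose size is linear in $|\mc{K}|$. The variables of this SPN are the literals of $\mc{K}$ (class-membership atoms $\mtt{Is(O,C)}$, attribute atoms $\mtt{A_i(O,D)}$, and relation atoms $\mtt{R_i(O,\dots)}$), and the leaves correspond to the constant weights $e^{w_i}$, the attribute factors $\alpha$, and the relation factors $\rho$, each of which is univariate in a single literal.

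The key step is proving decomposability by structural induction on the part-decomposition tree. At each node $(\mtt{O,C})$, equation~(\ref{eqn:tml}) presents a four-way product whose factors I must show have pairwise-disjoint scopes: the subclass-sum involves only the subclass-membership literals of $\mtt{O}$; the subparts product involves only literals of the proper descendants $\mtt{O.P_i}$; and the attribute/relation products involve only the attribute/relation literals of $\mtt{O}$. Since these four groups range over disjoint sets of atoms, the outer product is decomposable. Within the subparts product, because $\mtt{Parts(C)}$ enumerates distinct named copies and the object hierarchy forms a tree, the subtrees rooted at different $\mtt{O.P_i}$ involve disjoint sets of descendant objects and hence disjoint literals. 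The products over attributes and relations are trivially decomposable since each factor is univariate in a distinct literal, and the sum node over subclasses imposes no decomposability requirement. Structural induction then gives decomposability of the full SPN.

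Granted decomposability, Theorem~\ref{thm:spt} immediately implies that $Z_{\mc{K}} = \bigoplus_{\mb{W}} \phi(\mtt{O_0, C_0}, \mb{W})$ is computable in time linear in the SPN's size. To close the argument, I would bound that size: each object/class pair $(\mtt{O,C})$ contributes $O(n_C + n_r)$ edges (a constant per subclass, subpart, attribute, and relation rule of $\mtt{C}$), so summing over the $n_O$ object declarations yields $O(n_O(n_C + n_r)) = O(|\mc{K}|)$, giving the desired linear-time bound.

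The main obstacle is the decomposability argument, specifically verifying that duplicated subparts (the $n_i$ distinct copies named in $\mtt{Parts(C)}$) really do induce disjoint literal sets in the recursive calls, and that no attribute or relation literal of $\mtt{O}$ accidentally reappears in a descendant subtree. This amounts to carefully leveraging the hypothesis that $\mc{K}$'s objects form a tree of subparts and that $\mtt{C_0}$ occurs only at the root; once this disjointness is established, the remainder of the proof is mechanical bookkeeping and a direct invocation of the sum-product theorem, making the two-page argument of~\citet{Niepert2015} collapse to a one-paragraph corollary.
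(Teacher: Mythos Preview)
Your argument has a genuine gap in the decomposability step. You assert that ``the subclass-sum involves only the subclass-membership literals of $\mtt{O}$,'' but this is false: the subclass sum is $\sum_{\mtt{S_i \in Subs(C)}} e^{w_i}\,\phi(\mtt{O,S_i},\mb{W})$, and each summand recursively expands $\phi$ on the \emph{same} object $\mtt{O}$ at subclass $\mtt{S_i}$. That recursive call therefore contains the attribute factors $\alpha(\mtt{O,A_j},\mb{W})$ and relation factors $\rho(\mtt{O,R_j},\mb{W})$ for every attribute and relation declared in $\mtt{S_i}$ (and its subclasses), and these are literals of $\mtt{O}$, not of a proper subpart. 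Consequently, if class $\mtt{C}$ and some subclass $\mtt{S_i}$ both declare the same relation $\mtt{R_i(O,\dots)}$ or the same attribute $\mtt{A_i}$, that literal appears both in the subclass-sum factor and in the relation (or attribute) product factor of~(\ref{eqn:tml}), so the four-way product is \emph{not} decomposable as written.

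The paper's proof confronts exactly this: it identifies these two sources of non-decomposability and repairs them by distributing the offending $\rho$ (or $\alpha$) over the subclass sum and into each subclass, then merging the duplicated factors via $e^{w_i}[\mtt{R_i}]+[\neg\mtt{R_i}]$ times $e^{w_i'}[\mtt{R_i}]+[\neg\mtt{R_i}]$ collapsing to $e^{w_i+w_i'}[\mtt{R_i}]+[\neg\mtt{R_i}]$ (and analogously taking the element-wise product of attribute weight functions). This yields a decomposable SPN with only a linear blowup, after which the sum-product theorem applies. Your structural-induction skeleton and size bound are fine, and your concern about disjointness across subparts is handled by the tree structure (and the paper notes parts never share names along the class hierarchy), but the obstacle you flag is not the real one: the non-trivial overlap lives in the subclass hierarchy, not the part hierarchy, and your proposal does not address it.
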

\vspace{0.5em}

\section{Experimental details}
\label{sec:exper-sup}

All experiments were run on the same MacBook Pro with 2.2 GHz Intel Core i7 processor 
with 16 GB of RAM. Each optimization was limited to a single thread.

\begin{figure}[t]
\includegraphics[width=1.0\columnwidth]{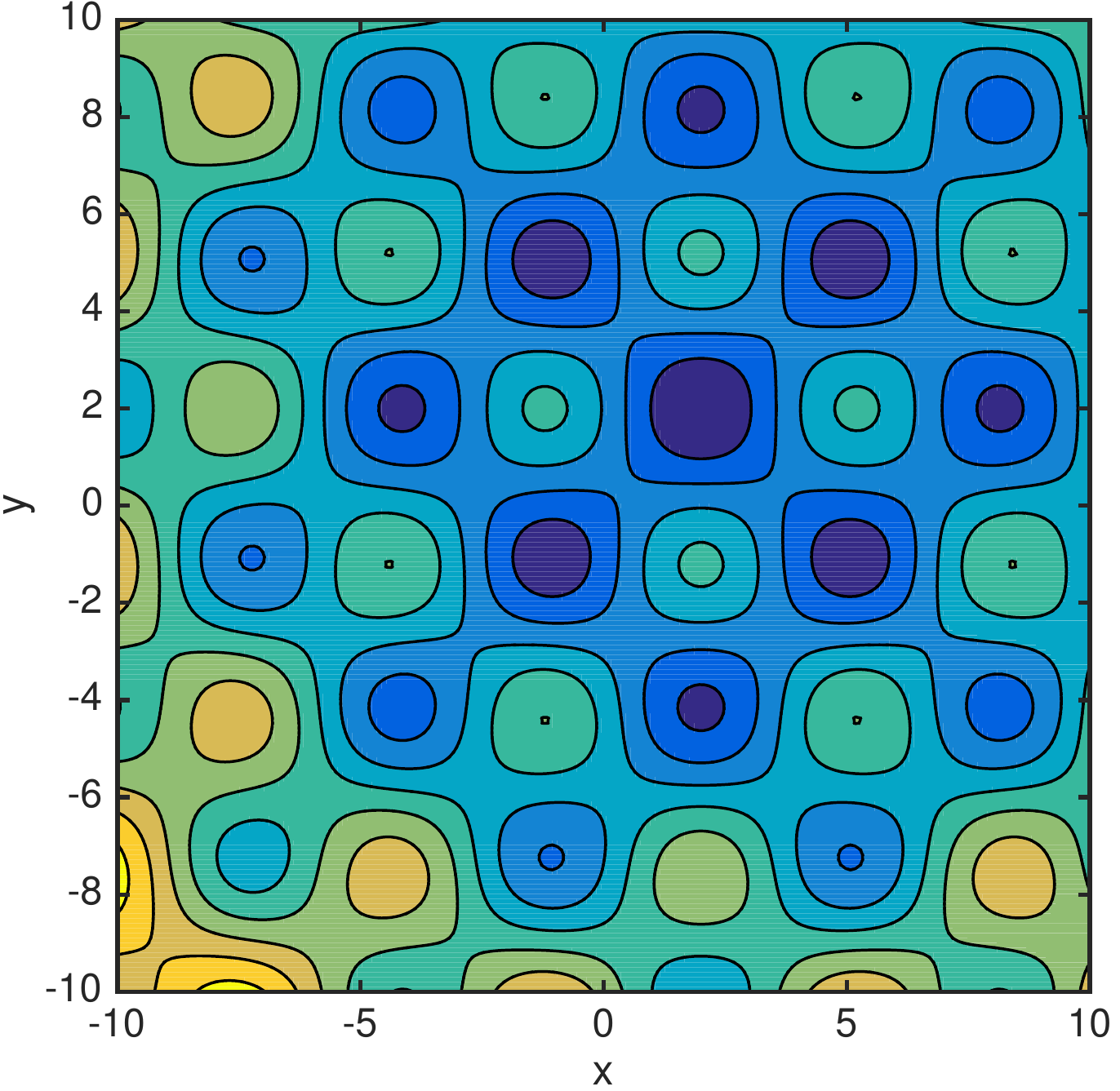}
\caption{\small{Contour plot of the 2-D nonconvex Rastrigin function.}
}
\label{fig:testfunc-sup}
\end{figure} 

The non-separable variant of the Rastrigin function~\citep{Torn1989} used on pairs of 
variables is defined as 
\begin{align}
f^R_{x_i, x_j}(Y_i, Y_j) =~& c_0  [(Y_i - x_i)^2 - (Y_j - x_j)^2] + \nonumber \\
& c_1 - c_1  \cos(Y_i - x_i) \cos(Y_j - x_j), \nonumber
\end{align}
which has a global minimum at $\y^* = (x_i, x_j)$ with value $f^R_{\x}(\y^*) = 0$.
The constants $c_0$ and $c_1$ control the shape of the quadratic basin and the amplitude
of the sinusoids, respectively. For our tests, we used $c_0 = 0.1$ and $c_1 = 20$.
Figure~\ref{fig:testfunc-sup} shows a contour plot of $f^R$.

Omitting the parameters $\x$ from $f^R_{\x}$ for simplicity, 
the full test function for $n = 4m$ variables is defined as 
$$F_{\x}(\Yb) = \sum_{i=0}^m f^R(Y_{4i}, Y_{4i + k}) + f^R(Y_{4i + 3}, Y_{4i + 3 - k}),$$ 
where $k=1$ with probability $0.5$ and $k=2$ otherwise.
This creates a function that is non-decomposable between each pair of variables 
$(Y_{4i}, Y_{4i + k})$ and $(Y_{4i + 3}, Y_{4i + 3 - k})$.
For the simplest case with $n=4$ variables, if $k=1$ then pairs $(Y_0, Y_1)$ and $(Y_2, Y_3)$
are non-decomposable. Alternatively, if $k=2$ then pairs $(Y_0, Y_2)$ and $(Y_1, Y_3)$ are
non-decomposable.
The global minimum $(x_i, x_j)$ for each function $f^R_{x_i, x_j}(Y_i, Y_j)$ was sampled
uniformly over an interval of length $2$ from the line $Y_i = Y_j$ with zero-mean additive Gaussian noise $(\sigma = 0.1)$.
Thus, each instance of $F_{\x}$ is highly nonconvex and is decomposable with respect to certain variables and not
with respect to others. For a set of instances of $F_{\x}$, there is structure in
the decomposability between variables, but different instances have different
decomposability structure, so \alg{LearnSPF} must first group those function instances
that have similar decomposability structure and then identify that structure
in order to learn a min-sum function that is applicable to any instance in the training data.

\section{Proofs}
\label{sec:proofs-sup}

\setcounter{thm}{0}
\setcounter{lemma}{0}
\setcounter{corollary}{0}

\begin{corollary}
Every SPF with bounded treewidth can be summed in time linear in the cardinality of its scope.
\label{cor:JTisSPFSUP}
\end{corollary}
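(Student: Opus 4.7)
The plan is to reduce this to the sum-product theorem by constructing, from a bounded-treewidth junction tree compatible with $S$, a tree-like SPF that is both compatible with $S$ and of size linear in $|\Xb|$.

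First, I would unpack the definition: if $S$ has bounded treewidth $k$, then by definition there exists a junction tree $\mc{T} = (T, Q)$ compatible with $S$ with $tw(\mc{T}) \leq k$. I would then invoke the tree-like SPF construction already presented in the excerpt to build a tree-like SPF $S'$ from $\mc{T}$, using functions $\{\psi_i(\Cb_i)\}$ that are chosen (via the marginalization formula $S(\x) = \bigotimes_{i \in T} \psi_i(\x_{\Cb_i})$ and the compatibility structure of $\mc{T}$) so that $S'$ computes the same mapping as $S$. The compatibility of $\mc{T}$ with $S$ guarantees that no independence asserted by $\mc{T}$ is violated in $S$, so such $\psi_i$ exist and yield $S'(\x) = S(\x)$ for all $\x \in \X$.

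Next, I would invoke the observation noted just after the tree-like SPF definition: any tree-like SPF is decomposable because all indicators for a given variable appear at the same level of the construction. So $S'$ is a decomposable SPF compatible with $S$, and applying the sum-product theorem yields a summation of $S'$ (hence of $S$) in time linear in $|S'|$.

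The remaining step, and the main technical point, is bounding $|S'|$ in terms of $|\Xb|$. Inspecting the construction: for each cluster $\Cb_i$ there are $|\X_{\Cb_i}|$ product nodes $c_i$ and leaf nodes $a_i$, and for each separator $\mb{S}_{ij}$ there are $|\X_{\mb{S}_{ij}}|$ sum nodes. Since $|\Cb_i| \leq tw(\mc{T}) + 1 \leq k+1$ and $|\mb{S}_{ij}| \leq k+1$, each cluster and separator contributes at most $d^{k+1}$ nodes where $d$ is the maximum domain size. With $k$ and $d$ treated as constants (bounded), this is $O(1)$ per cluster/separator. A junction tree over $n = |\Xb|$ variables has $O(n)$ clusters, and the edge count per node of $S'$ is similarly bounded by a constant depending only on $k$ and $d$ (each product node's children come from at most $k{+}1$ indicator families and the separators to its children in $T$). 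Hence $|S'| = O(n)$, and the sum-product theorem gives summation in $O(n)$ time. The main obstacle here is just being careful about this counting — in particular, ensuring that the number of children per product node stays bounded so that total edge count, not just node count, is linear in $n$.
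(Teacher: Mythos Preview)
Your approach is essentially the same as the paper's: construct a compatible tree-like SPF from a bounded-treewidth junction tree, observe it is decomposable, bound its size by $O(n)$, and apply the sum-product theorem.

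One small correction to your edge-counting argument: the number of children of a product node $c_j$ is \emph{not} bounded by a constant depending only on $k$ and $d$, because a cluster $j$ in the junction tree $T$ can have arbitrarily many children, and $c_j$ gets one sum-node child per child of $j$ in $T$. The right fix (which the paper uses) is to count edges globally rather than per node: summing $|\text{Ch}(j)|$ over all $j \in T$ gives the number of edges in the tree $T$, which is $m-1 = O(n)$. So each product node $c_k$ contributes $O((k{+}1)\cdot|\text{Ch}(k)|)$ edges to its sum-node children, and summing this over all clusters still yields $O(n \cdot d^{k+1}) = O(n)$. Your conclusion is correct; only the per-node boundedness claim needs to be replaced by this aggregate argument.
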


\vspace{-1em}
\begin{proof}
Let $\Xb = (X_1, \dots, X_n)$,
let $(\sr,\oplus,\otimes,0,1)$ be a commutative semiring, and
let $F(\Xb)$ be an SPF with bounded treewidth $tw(F) = a$ for $0 < a < \infty$.
Let $S(\Xb)$ be a tree-like SPF that is compatible with $F$ 
and has junction tree $\mc{T} = (T,Q)$ with treewidth $tw(\mc{T}) = a$.
The size of the largest cluster in $\mc{T}$ is $\alpha = a + 1$.
Let $m = {|Q|} \leq n$ and $d = |\X_v|$ for all $X_v \in \Xb$.
Further, other than the root $s \in S$, there is a one-to-one 
correspondence between separator instantiations $\mb{s}_{ij} \in \X_{\mb{S}_{ij}}$
and sum nodes $s_{ij} \in S$, and between cluster instantiations 
$\mb{c}_j \in \X_{\Cb_j}$ and product nodes $c_j \in S$. 
Now, the number of edges in $S$ can be obtained by counting the edges that correspond to each
edge in $T$ and summing over all edges in $T$, as follows. By construction, 
each edge $(j,k) \in T$ corresponds
to the product nodes $\{c_k\}$; their children, which are the leaf nodes (indicators and constants)
and the sum nodes $\{s_{jk}\}$; and the children of $\{s_{jk}\}$, which are the product nodes $\{c_j\}$.
By definition, the $\{c_j\}$ have only a single parent, so there are $|\X_{\Cb_j}| \leq d^{\alpha}$ edges between 
$\{s_{jk}\}$ and $\{c_j\}$. Further, each $c_k$ has only $|\Cb_k|+1$ leaf node children
and $|\text{Ch}(k)|$ sum node children, so there are 
$|\X_{\Cb_k}| (|\Cb_k|+1) (|\text{Ch}(k)|) \leq d^{\alpha} (\alpha+1) (|\text{Ch}(k)|)$ edges
between $\{c_k\}$ and $\{s_{jk}\}$. 
In addition, there are also $\X_{\Cb_r} = d^\alpha$ edges between the root $s \in S$ and the
product nodes $c_r$.
Thus, since $T$ is a tree with $m-1$ edges, 
$\text{size}(S) \leq d^{\alpha} + \sum_{(j,k) \in T} 2 d^{\alpha} (\alpha+1) (|\text{Ch}(k)|) = O(m d^{\alpha})$, 
which is $O(n)$.
Since $S$ is decomposable and has size $O(n)$, then, from the sum-product theorem, 
$S$ can be summed in time $O(n)$. Furthermore, $S$ is compatible with $F$,
so $F$ can be summed in time $O(n)$, and the claim follows.
\end{proof}

\begin{corollary}
Not every SPF that can be summed in time linear in the cardinality of its scope has bounded treewidth.
\label{cor:SPFnotJTSUP}
\end{corollary}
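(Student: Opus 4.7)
The plan is to exhibit an explicit family of SPFs $\{S_n\}$, one over each set of $n$ variables, whose size and summation cost are linear in $n$ but whose treewidth grows with $n$. A convenient candidate is the two-component mixture of fully factorized functions,
\begin{equation*}
S_n(X_1,\dots,X_n) \;=\; \bigotimes_{i=1}^{n} \phi_i^{1}(X_i) \;\oplus\; \bigotimes_{i=1}^{n} \phi_i^{2}(X_i),
\end{equation*}
working, for concreteness, in the non-negative real sum-product semiring with Boolean variables and generic (say, rationally independent) values $\phi_i^{j}(x) > 0$. I would take this as my witness family.

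First I would verify the easy direction: $S_n$ has one sum node, two product nodes, and $2n$ leaf nodes, so its size is $O(n)$. Moreover every product node's children have pairwise disjoint scopes, so $S_n$ is decomposable. By the sum-product theorem (Theorem~\ref{thm:spt}), $\bigoplus_{\X} S_n$ can be computed in $O(n)$ time, which is linear in $|\Xb| = n$.

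The heart of the argument is to show that $S_n$ has unbounded treewidth, i.e.\ that no junction tree of bounded cluster size is compatible with $S_n$. By the definitions in the excerpt, incompatibility reduces to $S_n$ failing any non-trivial conditional independence that a small-treewidth junction tree would force. So I would prove the following lemma: for generic choices of the $\phi_i^{j}$, no partition $\Xb = \Xb_A \cup \Xb_B \cup \Xb_W$ with $\Xb_A,\Xb_B \neq \varnothing$ and $|\Xb_W| < n-2$ makes $\Xb_A$ and $\Xb_B$ conditionally independent in $S_n$ given $\Xb_W$. Plugging the mixture form into the independence equation $S_n(\Xb_A,\Xb_B,\mathbf{w}) \cdot S_n(\ast,\ast,\mathbf{w}) = S_n(\Xb_A,\ast,\mathbf{w}) \cdot S_n(\ast,\Xb_B,\mathbf{w})$ (where $\ast$ denotes summing out), cross-multiplying out the two mixture components, and collecting monomials in $\phi_i^{1},\phi_i^{2}$, one obtains an algebraic identity that is forced to hold for all values of $\Xb_A,\Xb_B$. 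A direct monomial comparison shows that this identity fails unless one of the two components degenerates on $\Xb_A$ or $\Xb_B$, which is ruled out by genericity. Consequently any compatible junction tree must have a single cluster containing all variables, so $tw(S_n) = n-1 \to \infty$.

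I expect the algebraic step---ruling out all non-trivial conditional independencies of a mixture of two product distributions---to be the main obstacle, though it is a classical fact about two-component mixtures and can be made rigorous either by the monomial argument sketched above or by exhibiting a concrete $3$-variable instance, then lifting it to $n$ variables by padding the remaining coordinates with independent copies so that the forbidden independence propagates to the full scope. Once that is in place, the corollary follows immediately from the size bound above and the definition of treewidth for SPFs.
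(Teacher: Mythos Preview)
Your proposal is correct and takes essentially the same route as the paper: both exhibit the mixture-of-products family $\bigoplus_{j} \bigotimes_{i} \psi_{ji}(X_i)$ (you fix $r=2$, the paper leaves $r$ arbitrary), observe that it is decomposable of size $O(n)$, and then argue that for generic leaf functions no non-trivial conditional independence holds, forcing treewidth $n-1$. One small mismatch to watch: the paper's Definition of conditional independence is the unnormalized form $F(\Xb_A,\Xb_B,\mb{w}) = F(\Xb_A,\mb{w}) \otimes F(\Xb_B,\mb{w})$, without the extra $S_n(\ast,\ast,\mb{w})$ factor you wrote; adjust your factorization identity accordingly, though the monomial-comparison argument goes through the same way.
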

\begin{proof}
By counterexample. 
Let $\Xb = (X_1, \dots, X_n)$ be a vector of variables,
$(\sr, \oplus, \otimes,0,1)$ be a commutative semiring, 
and $k = {|\mc{\X}_i|}$ for all $X_i \in \Xb$.
The SPF $F(\Xb) = \bigoplus_{j=1}^r \bigotimes_{i=1}^n \psi_{ji}(X_i)$
can be summed in time linear in $n$ because $F$ is decomposable
and has size $r(n+1)$.
At the same time, $F(\Xb)$ has treewidth $n - 1$ (i.e., unbounded)
because there are no pairwise-disjoint subsets $\mb{A}, \mb{B}, \Cb \subseteq \Xb$ with
domains $\X_A,\X_B,\X_C$ such that
$\mb{A}$ and $\mb{B}$ are conditionally independent in $F$ given $\Cb$, and thus the smallest
junction tree compatible with $F(\Xb)$ is a complete clique over $\Xb$. This can be seen as follows. 
Without loss of generality, let $\mb{A} \cup \mb{B}$ be the first $m$ variables in $\Xb$, $(X_1, \dots, X_m)$.
For any $\mb{c} \in \X_{C}$, 
$F(\mb{A},\mb{B}, \mb{c}) \propto \bigoplus_{j=1}^r \bigotimes_{i : X_i \in \mb{A} \cup \mb{B}} \psi_{ji}(X_i) 
= ( \psi_{11}(X_1) \otimes \cdots \otimes \psi_{1m}(X_m) ) \oplus \dots \oplus 
( \psi_{r1}(X_1) \otimes \cdots \otimes \psi_{rm}(X_m) )$. For $F(\mb{A}, \mb{B},\mb{c})$ to factor,
the terms in the right-hand side must have common factors; however, in general,
each $\psi_{ji}$ is different, so there are no such factors. Thus, 
$F(\mb{A}, \mb{B}, \mb{c}) \neq F(\mb{A},\mb{c}) \otimes F(\mb{B}, \mb{c})$
for all $\mb{c} \in \X_C$, and there are no conditional independencies in $F$.
\end{proof}

\setcounter{corollary}{7}
\begin{corollary}
The model count of a deterministic, decomposable NNF can be computed in time linear in its size.
\label{cor:sharpsatSUP}
\end{corollary}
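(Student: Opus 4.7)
The plan is to reduce model counting to summation in the counting sum-product semiring $\mc{P} = (\N, +, \times, 0, 1)$ and then invoke the sum-product theorem. Concretely, I would first translate the given NNF $F$ (an SPF on the Boolean semiring $\mc{B}$) to an SPF $F'$ on $\mc{P}$ by replacing every $\vee$ node with $+$, every $\wedge$ node with $\times$, and every literal leaf $\phi_l$ with the integer-valued indicator that returns $1$ when the literal is satisfied and $0$ otherwise. Since translation only relabels operators and leaves the DAG structure and scopes untouched, $F'$ is decomposable whenever $F$ is. Thus by Theorem~\ref{thm:spt}, $\bigoplus_{\x \in \X} F'(\x) = \sum_{\x \in \X} F'(\x)$ is computable in time linear in $|F'| = |F|$.

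The remaining task is to verify that this sum equals the model count of $F$, i.e., $|\{\x \in \X : F(\x) = 1\}|$. I would prove this by structural induction on the nodes of $F'$, showing that at every node $v$, $\sum_{\x \in \X_v} F'_v(\x)$ equals the number of assignments to $\Xb_v$ that satisfy the sub-NNF $F_v$. The leaf case is immediate from the definition of the translated indicator leaves. For a decomposable AND node, the children have pairwise disjoint scopes, so an assignment $\x \in \X_v$ satisfies $\wedge_i F_{v,i}$ iff each $\x_{\Xb_{v,i}}$ satisfies $F_{v,i}$; this is exactly what the product of the counts in $\mc{P}$ computes (using the same manipulation used in the proof of Theorem~\ref{thm:spt} to move the summation inside the product of disjoint-scope terms). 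For an OR node, determinism asserts that the supports of the children are disjoint, so the set of satisfying assignments of $\vee_i F_{v,i}$ is a \emph{disjoint} union of the satisfying sets of its children; hence addition in $\mc{P}$ yields the exact count rather than over-counting.

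The main obstacle is the OR-node step: without determinism, the sum in $\mc{P}$ would overcount any assignment satisfied by multiple disjuncts (since $+$ in $\N$ is not idempotent, unlike $\vee$ in $\B$), and the induction would fail. Determinism is precisely the semantic condition that reconciles the mismatch in idempotence between $\mc{B}$ and $\mc{P}$, ensuring the translated SPF faithfully represents the model-counting function. Once this alignment is established, the linear-time bound is immediate from Theorem~\ref{thm:spt} applied to the decomposable $F'$.
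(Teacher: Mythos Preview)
Your proposal is correct and follows essentially the same approach as the paper: translate the NNF to the counting sum-product semiring, observe that decomposability and size are preserved so Theorem~\ref{thm:spt} gives the linear-time bound, and then verify by structural induction on nodes (leaves trivial, AND via decomposability, OR via determinism) that the translated sum equals the model count. Your explicit remark that determinism is what bridges the idempotence mismatch between $\vee$ in $\mc{B}$ and $+$ in $\N$ is exactly the point the paper is making.
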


\begin{proof}
Let $F(\Xb)$ be a deterministic, decomposable NNF and $F'(\Xb)$ 
be $F$ translated to the sum-product semiring.
Clearly, $F$ and $F'$ have equal size and $F'$ is deterministic and decomposable.
Thus, from the sum-product theorem, $\sum_{\X} F'(\Xb)$ 
takes time linear in the size of $F$.
Let $v$ be a node in $F$ and $v'$ its corresponding node in $F'$.
It remains to show that $\sum_{\X} F_v'(\Xb) = \shSAT( F_v(\Xb) )$ for all $v,v'$, 
which we do by induction.
The base case with $v$ a leaf node holds trivially. 
For the induction step, assume that $\sum_{\X_i} F_i'(\Xb_i) = \shSAT( F_i(\Xb_i) )$
for each child $c_i \in \text{Ch}(v)$ (resp. $c_i' \in \text{Ch}(v')$).
If $v$ is an AND node then $v'$ is a multiplication node and 
$\shSAT(F_v(\Xb)) = \shSAT( \bigwedge_{c_i} F_i(\x_i) ) = 
\prod_{c_i} \shSAT( F_i(\x_i) ) = 
\sum_{\X} F_v'(\Xb)$, because $v$ and $v'$ are decomposable.
If $v$ is an OR node then $v'$ is an addition node and
$\shSAT( F_v(\Xb) ) = \shSAT( \bigvee_{c_i} F_i(\x_i) ) = 
\sum_{c_i} \shSAT( F_i(\x_i) ) = 
\sum_{\X} F_v'(\Xb)$, because $v$ is deterministic, 
so its children are logically disjoint.
\end{proof}

\setcounter{corollary}{9}
\begin{corollary}
The MPE state of a selective, decomposable SPN can be found in time linear in its size.
\label{cor:mpeSUP}
\end{corollary}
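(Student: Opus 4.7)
The plan is to reduce the MPE problem to a summation in the max-product semiring, invoke the sum-product theorem, and recover the state by a single downward pass. First, I would translate the SPN $S$ (on $(\R_+,+,\times,0,1)$) to the SPF $S'$ on the max-product semiring $(\R_+,\max,\times,0,1)$ in the sense of Definition 8 in the appendix: replace every sum node by a max node and leave product nodes and leaf functions intact. Since translation does not change the underlying DAG or the scopes, $S'$ has the same size as $S$ and is decomposable whenever $S$ is. By the sum-product theorem, $\bigoplus_{\X} S'(\Xb) = \max_{\X} S'(\Xb)$ is then computable in time linear in $|S|$.

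The core of the argument is to show that, for selective decomposable SPNs, $\max_{\X} S'(\Xb) = \max_{\X} S(\Xb)$. I would prove the stronger pointwise statement $S_v(\x_v) = S'_v(\x_v)$ for every node $v$ and every $\x_v \in \X_v$, by induction on the structure of $S$. The base case for leaves is immediate, since leaves are unchanged by translation. For a product node $v$ with children $\{c_i\}$, decomposability gives disjoint scopes $\Xb_i$, so $S_v(\x_v) = \prod_i S_i(\x_{v\upharpoonright i}) = \prod_i S'_i(\x_{v\upharpoonright i}) = S'_v(\x_v)$ by the induction hypothesis. For a sum node $v$, selectivity (the SPN analogue of determinism) ensures that for each $\x_v$ at most one child $c_i$ has $S_i(\x_v) \neq 0$, so $\sum_i S_i(\x_v) = \max_i S_i(\x_v) = \max_i S'_i(\x_v) = S'_v(\x_v)$. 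Taking the max over $\Xb$ of both sides at the root completes the reduction, and the MPE value is obtained in linear time.

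To recover an MPE state rather than just the value, I would do a standard downward pass starting from the root of $S'$: at every max node, recursively descend into a child attaining the maximum cached value; at every product node, recursively descend into all children; at leaves, record the argmax of the univariate (or constant) factor. Decomposability guarantees that the scopes encountered on this downward pass are disjoint across the children of product nodes, so the union of the per-leaf argmaxes assembles into a single coherent assignment $\x^\ast \in \X$. The pass visits each edge at most once, so it runs in time linear in $|S|$, and combining it with the upward max-product evaluation yields the claimed linear total runtime.

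The step I expect to be the main obstacle is the inductive equality $S_v = S'_v$ at sum nodes: it is exactly the place where selectivity is essential, and it must be stated pointwise (not just under the outer $\max/\sum$) so that the product-node step of the induction goes through unchanged. Once this pointwise equality is in hand, both the correctness of the value computation via the sum-product theorem and the correctness of the downward-pass state recovery follow essentially mechanically.
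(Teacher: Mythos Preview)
Your proposal is correct and follows the same route as the paper: translate to the max-product semiring, invoke the sum-product theorem for the linear-time upward pass, and recover the state by a linear-time downward pass. Your induction on the pointwise equality $S_v(\x)=S'_v(\x)$ is a slight strengthening of the paper's induction on $\max_{\X_v} S_v = \max_{\X_v} S'_v$, but the key use of selectivity at sum nodes (to turn the sum into a max) and of decomposability (for the sum-product theorem and the coherence of the downward pass) is identical.
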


\begin{proof}
Let $S(\Xb)$ be a selective, decomposable SPN and $S'(\Xb)$ its max-product version, 
which has the same size and is also selective and decomposable.
For clarity, we assume no evidence since it is trivial to incorporate.
From the sum-product theorem, $\max_{\X} S'(\Xb)$ takes time linear in the size of $S$.
Let $v$ be a node in $S$ and $v'$ its corresponding node in $S'$.
It remains to show that $\max_{\X} S_v'(\Xb) = \max_{\X} S(\Xb)$ for all $v, v'$,
which we do by induction on $v$.
The base case with $v$ a leaf holds trivially because $v$ and $v'$ are identical.
For the induction step, assume that $\max_{\X_i} S_i'(\Xb_i) = \max_{\X_i} S_i(\Xb_i)$ for
each child $c_i \in \text{Ch}(v)$ (resp. $c_i' \in \text{Ch}(v')$). 
If $v$ is a product node then so is $v'$ and $\max_{\X} S_v'(\Xb) = \max_{\X} S_v(\Xb)$.
If $v$ is a sum node then $v'$ is a max node and
$\max_{\X} S(\Xb) = \max_{\x \in \X} \sum_{c_i} S_i(\x_i) =
\max_{\x \in \X} \{ \max_{c_i} S_i(\x_i) \} =
\max_{c_i} \{ \max_{\x_i \in \X_i} S_i(\x_i) \} = \max_{\X} S'(\Xb)$, 
where the second equality occurs because $v$ is selective.
%
After summing $S'$, the MPE state is recovered by a downward pass in $S'$, 
which takes linear time.
\end{proof}

\setcounter{corollary}{10}
\begin{corollary}
The partition function of TPKB $\mc{K}$ can be computed in time linear in its size.
\end{corollary}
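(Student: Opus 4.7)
The plan is to apply the sum-product theorem to the SPN that the recursive definition in Eq.~(\ref{eqn:tml}) induces on $\phi(\mtt{O_0, C_0,} \mb{W})$. First I would unfold the recursion into an explicit SPN: each invocation of $\phi$ becomes a product node whose children are (i) a weighted sum node over subclasses with children $\phi(\mtt{O, S_i,} \mb{W})$ weighted by $e^{w_i}$, (ii) the recursive products $\phi(\mtt{O.P_i, C_i,} \mb{W})$ for the subparts, (iii) small local SPN fragments for the attribute factors $\alpha(\mtt{O, A_i,} \mb{W})$, and (iv) small local fragments for the relation factors $\rho(\mtt{O, R_i,} \mb{W})$. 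Each $\alpha$ and $\rho$ can be written as a constant-size sum over the finite literal domain gated by indicator leaves, so each contributes $O(1)$ nodes.

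Next I would verify decomposability at every product node. The key structural fact, built into the definition of a TPKB, is that the subparts form a tree: the scopes of recursive calls $\phi(\mtt{O.P_i, C_i,} \mb{W})$ for distinct subparts (including the $n_i$ duplicated copies, which are themselves distinct named subparts) live in disjoint subtrees of the part decomposition and therefore involve disjoint literal sets. The subclass, attribute, and relation factors at $\mtt{(O,C)}$ introduce only literals owned by $\mtt{(O,C)}$ itself, namely $\mtt{Is(O,S_i)}$, $\mtt{A_i(O,\cdot)}$, and $\mtt{R_i(O.P_a,\dots,O.P_z)}$; these families are pairwise disjoint and disjoint from the deeper literals touched by the recursive calls. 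Hence the top-level product over the four factors is decomposable, as are the internal products over $\mtt{Atts(C)}$ and $\mtt{Rels(C)}$.

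Then I would bound the size. For each object $\mtt{(O,C)}$, the local SPN contribution has $O(|\mtt{Subs(C)}|+|\mtt{Parts(C)}|+|\mtt{Atts(C)}|+|\mtt{Rels(C)}|)$ edges. Summed over all $n_O$ objects produced by the part decomposition, this gives total size $O(n_O(n_C+n_r)) = O(|\mc{K}|)$. Since the SPN is decomposable and its size is linear in $|\mc{K}|$, Theorem~\ref{thm:spt} immediately yields $Z_{\mc{K}}=\sum_{\mb{W}} \phi(\mtt{O_0, C_0,}\mb{W})$ in linear time.

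The main obstacle will be the relation factors $\rho(\mtt{O, R_i,} \mb{W})$, whose literal $\mtt{R_i(O.P_a,\dots,O.P_z)}$ has a scope spanning several subparts of $\mtt{C}$. A naive reading might worry that this literal is shared with the subpart recursions and so violates decomposability. The resolution, which I would state as an invariant and verify by induction on the part hierarchy, is that each literal in $\mb{W}$ is \emph{owned} by a unique level: $\mtt{R_i}$ at $\mtt{C}$'s level is owned by $\mtt{(O,C)}$ and never appears in any $\phi(\mtt{O.P_j, C_j, } \mb{W})$, whose scope is confined to literals rooted strictly below $\mtt{O.P_j}$. Once this ownership invariant is established, every product node in the unfolded SPN is decomposable and the sum-product theorem closes the argument.
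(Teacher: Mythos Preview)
Your proof has a genuine gap: you look for non-decomposability in the wrong place and miss the place where it actually occurs.

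The obstacle you raise --- that a relation literal $\mtt{R_i(O.P_a,\dots,O.P_z)}$ mentions several subparts and so might overlap with the subpart recursions --- is not the real difficulty. As you correctly argue, that literal is a single Boolean variable owned at $(\mtt{O},\mtt{C})$'s level and does not reappear inside any $\phi(\mtt{O.P_j,C_j},\mb{W})$, so the product over the \emph{subpart} factor and the relation factor is already decomposable.

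The real source of non-decomposability, which your ownership invariant does not cover, is the \emph{subclass} factor. The first factor in (\ref{eqn:tml}) is a sum over $\phi(\mtt{O,S_i},\mb{W})$ for the \emph{same object} $\mtt{O}$. If a subclass $\mtt{S_j}$ declares the same attribute $\mtt{A_i}$ or the same relation $\mtt{R_i}$ as $\mtt{C}$, then $\phi(\mtt{O,S_j},\mb{W})$ contains a factor $\alpha(\mtt{O,A_i},\mb{W})$ or $\rho(\mtt{O,R_i},\mb{W})$ on exactly the same literal $\mtt{A_i(O,D)}$ or $\mtt{R_i(O,\dots)}$ that already appears in the outer product at $(\mtt{O},\mtt{C})$. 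Your claim that the subclass factor's literals are ``disjoint from the deeper literals touched by the recursive calls'' is therefore false for the subclass recursion, and the top-level product in (\ref{eqn:tml}) is \emph{not} decomposable as written.

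The paper's proof resolves precisely this: it distributes the shared $\rho$ (or $\alpha$) over the subclass sum and into each $\phi(\mtt{O,S_j},\mb{W})$, observes that the resulting duplicated factor collapses via $[\mtt{R_i}]^2=[\mtt{R_i}]$ and $[\mtt{R_i}][\neg\mtt{R_i}]=0$ to a single $\rho$ with merged weight $w_i+w_i'$ (and analogously $\mb{u}_i\odot\mb{u}_i'$ for attributes), and checks that this decomposition adds only linearly many edges. Only after this pushdown is the SPN decomposable, at which point your size bound and the appeal to Theorem~\ref{thm:spt} go through.
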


\begin{proof}
The only sources of non-decomposability in (\ref{eqn:tml}) are
if an object $(\mtt{O,C})$ and one of its subclasses $(\mtt{O,S_j})$ both contain
(i) the same relation $\mtt{R_i(O,\dots)}$ or 
(ii) the same attribute $\mtt{A_i(O,D)}$.
Note that they cannot contain the same part since two classes such that one is a
descendant of the other in the class hierarchy never have a
part with the same name.
In each of the above cases, the shared relation (or attribute) can be pushed into
each subclass $(O,S_j)$ by distributing $\rho(\mtt{O,R_i,}\mb{W})$ (or $\alpha(\mtt{A_i,C_i,} \mb{W})$)
over the subclass sum and into each subclass (this can be repeated for multiple levels
of the class hierarchy). This makes the product over relations (attributes) in $\phi(\mtt{O,S_j,} \mb{W})$ 
non-decomposable, but does not affect the decomposability of any other objects. 
Now, $\phi(\mtt{O,S_j,} \mb{W})$ can be decomposed, as follows.
For (i), the product over relations in $\phi(\mtt{O,S_j,} \mb{W})$ 
now contains the non-decomposable factor $F(\mtt{R_i}) = \rho(\mtt{O,R_i,} \mb{W}) \cdot \rho'(\mtt{O,R_i,} \mb{W})$,
where $\rho'$ was pushed down from $\mtt{(O,C)}$. 
However, $F(\mtt{R_i}) = (e^{w_i}[\mtt{R_i}] + [\neg \mtt{R_i}]) \cdot (e^{w_i'}[\mtt{R_i}] + [\neg \mtt{R_i}])
= e^{w_i + w_i'}[\mtt{R_i}] + [\neg \mtt{R_i}]$ since $[a]^2 = [a]$ and $[a][\neg a] = 0$ for a literal $a$.
Thus, $F(\mtt{R_i})$ is simply $\rho(\mtt{O,R_i,}\mb{W})$ with weight $w_i + w_i'$ for $\mtt{R_i}$, which
results in the same decomposable SPN structure with a different weight.
For (ii), 
let the weight function for attribute $\mtt{A_i}$ of class $\mtt{C}$ with domain $\mb{D}_i$ 
be $\mb{u}_i$ and the weight function from the attribute that was pushed down be $\mb{u}_i'$.
To render this decomposable, simply replace $\mb{u}_i$ with $\mb{u}_i \odot \mb{u}_i'$,
the element-wise product of the two weight functions. Again, decomposability is achieved simply 
by updating the weight functions.
Decomposing (i) and (ii) each adds only a linear number of edges to the original 
non-decomposable SPN, so the size of the corresponding decomposable SPN is $|\mc{K}|$.
Thus, from the sum-product theorem, computing the partition function of TPKB $\mc{K}$ takes
time linear in $|\mc{K}|$.
\end{proof}

\end{appendices}

\end{document}